\definecolor{darkred}{rgb}{0.55, 0, 0} 
\definecolor{darkblue}{rgb}{0, 0, 0.55} 
\def\Sc{\mathcal{S}}
\def\Ac{\mathcal{A}}
\def\Hc{\mathcal{H}}
\def\Oc{\mathcal{O}}
\def\Oct{\Tilde{\mathcal{O}}}
\def\sp{\text{span}}
\def\E{\mathbb{E}}
\DeclareMathOperator{\poly}{poly}
\def\argmax{\text{argmax}}
\def\det{\text{det}}
\def\nn{\nonumber}
\def\Hc{\mathcal{H}}
\def\Sc{\mathcal{S}}
\def\Ac{\mathcal{A}}
\def\Zc{\mathcal{Z}}
\def\Rc{\mathcal{R}}
\def\Oc{\mathcal{O}}
\def\Vc{\mathcal{V}}
\def\Dc{\mathcal{D}}
\def\Ec{\mathcal{E}}
\def\E{\mathbb{E}}
\def\Rr{\mathbb{R}}
\def\Nn{\mathbb{N}}
\def\psibar{\overline{\psi}}
\def\Oct{\tilde{\mathcal{O}}}
\newtheorem{lemma}{Lemma}
\newtheorem{assumption}{Assumption}
\newtheorem{theorem}{Theorem}
\newtheorem{definition}{Definition}
\newcommand{\real}{\mathbb{R}}
\newcommand{\EEs}[2]{\mathbb{E}_{#2}\left[#1\right]}
\newcommand{\EEcc}[2]{\mathbb{E}\left[\left.#1\right|#2\right]}
\def\argmax{\mathop{\mbox{ arg\,max}}}
\newcommand{\norm}[1]{\left\|#1\right\|}
\newcommand{\pa}[1]{\left(#1\right)}
\definecolor{PalePurp}{rgb}{0.66,0.57,0.66}
\newtheorem{remark}[theorem]{Remark}
\title{Kernel-Based Function Approximation for Average Reward Reinforcement Learning: An Optimist No-Regret Algorithm}
\author{%
  Sattar Vakili \\
  MediaTek Research\\
  \texttt{sattar.vakili@mtkresearch.com} \\
\And
    Julia Olkhovskaya \\
  TU Delft\\
  \texttt{julia.olkhovskaya@gmail.com} \\
}
\begin{document}

\maketitle

\begin{abstract}
  Reinforcement learning utilizing kernel ridge regression to predict the expected value function represents a powerful method with great representational capacity. This setting is a highly versatile framework amenable to analytical results. We consider kernel-based function approximation for RL in the infinite horizon average reward setting, also referred to as the undiscounted setting. We propose an \emph{optimistic} algorithm, similar to acquisition function based algorithms in the special case of bandits. We establish novel \emph{no-regret} performance guarantees for our algorithm, under kernel-based modelling assumptions. Additionally, we derive a novel confidence interval for the kernel-based prediction of the expected value function, applicable across various RL problems. 
 
\end{abstract}

\section{Introduction}

Reinforcement learning (RL) has demonstrated substantial practical success across a variety of application domains, including gaming \citep{silver2016mastering,lee2018deep,vinyals2019grandmaster}, autonomous driving \citep{kahn2017uncertainty}, microchip design \citep{mirhoseini2021graph}, robot control \citep{kalashnikov2018scalable}, and algorithmic search \citep{fawzi2022discovering}. This empirical success has prompted deeper investigations into the analytical understanding of RL, especially in complex environments. Over the past decade, significant advances have been made in establishing theoretically grounded algorithms for various settings. In this work, we focus on the infinite horizon average reward setting, also known as the undiscounted setting \citep{wei2020model, wei2021learning}. This setting is particularly well-suited for applications that involve continuing operations not divided into episodes such as load balancing and stock market operations. In contrast to the episodic setting \citep{jin2020provably} and the discounted setting \citep{zhou2021provably}, theoretical understanding of RL algorithms is relatively limited for the undiscounted
setting. We develop a computationally efficient algorithm and establish its theoretical performance guarantees in the undiscounted case.

There is a natural progression in the complexity of RL models corresponding to the structural complexity of the Markov Decision Process (MDP). This progression ranges from tabular models to linear, kernel-based, and deep learning-based models. The kernel-based structure is an extension of linear structure to an infinite-dimensional linear model in the feature space of a positive definite kernel, resulting in a highly versatile model with great representational capacity for nonlinear functions. In addition, the closed-form expressions for the prediction and the uncertainty estimate in kernel-based models allow the development of algorithms based on nonlinear function approximation that are amenable to theoretical analysis. Kernel-based models also serve as an intermediate step towards understanding the deep learning-based models~\citep[see, e.g.,][]{yang2020provably} based on the Neural Tangent (NT) kernel approach~\citep{jacot2018neural}.

The infinite-horizon average-reward setting has been extensively explored under the tabular structure~\citep{auer2008near, wei2020model, zhang2023sharper}.
Under the performance measure of \emph{regret}, defined as the difference in the total reward achieved by a learning algorithm over $T$ steps and that of the optimal stationary policy, performance bounds of $\Oc(\poly(|\Sc|, |\Ac|)\sqrt{T})$ have been established~\citep[see, e.g.,][]{zhang2020variational}, where $\Sc$ and $\Ac$ represent the state and action spaces, respectively, and the regret grows polynomial with their sizes.
It is assumed for these results that the MDP is weakly communicating, a condition necessary for achieving sublinear regret \citep{bartlett2009regal}.
Averaged over $T$ steps, the regret diminishes as $T$ increases, thereby offering what is known as a \emph{no-regret} performance guarantee. The applicability of the tabular setting is limited, as many real-world problems feature very large or potentially infinite state-action spaces. Consequently, recent literature has explored the use of function approximation in RL, particularly through linear models \citep{abbasi2019politex, abbasi2019exploration, hao2021adaptive, wei2021learning}. This approach represents the value function or the transition model via a linear transformation applied to a predefined feature mapping. In the linear setting, regret bounds of $\mathcal{O}((dT)^{\frac{3}{4}})$ have been established \citep{wei2021learning}, where $d$ represents the ambient dimension of the linear feature map. Kernel-based models can be considered as linear models in the feature space of the kernel. That, however, is often infinite dimensional ($d=\infty$). As such, the results with linear models do not translate to the kernel-based settings, necessitating novel analytical techniques. Also, for a discussion on further limitations of the linear models, see~\cite{lee2023demystifying}.

In this work, we propose the first RL algorithm in the infinite horizon average reward setting with non-linear
function approximation using kernel-ridge regression. 
This is one of the most general models that lends well to theoretical analysis.
Our algorithm, referred to as Kernel-based Upper Confidence Bound (KUCB-RL), utilizes kernel ridge regression to
build predictor and uncertainty estimates for the expected value function. Inspired by the principle of
optimism in the face of uncertainty and equipped with these statistics, KUCB-RL builds an upper confidence bound on the state-action value function over a future window of $w$ steps. 
This bound serves as a proxy $q_t$, at each step $t$, for the state-action value function over this future window. At each step $t$ with the current state~$s_t$, the action is selected greedily with respect to this proxy: $a_t=\argmax_{a\in \Ac}q_t(s_t,a)$. This approach resembles the acquisition function based algorithms such as GP-UCB and GP-TS, using Upper Confidence Bound and Thompson sampling, respectively, in the context of kernel-based bandits, also known as Bayesian optimization~\citep{srinivas2010, chowdhury2017kernelized}. Kernel-based bandit setting corresponds to the degenerate case of $|\Sc|=1$. In comparison, in the RL setting, the action is selected based on the current state, and the reward depends on both the state and the action. A kernel-based model is used to provide predictions for the expected value function, which varies due to the Markovian nature of the temporal dynamics. This makes the RL problem significantly more challenging than the bandit problem where the predictions are derived for a fixed reward function. To address this latter challenge, we derive a novel kernel-based confidence interval that is applicable across RL problems. 

\subsection{Contributions}

To summarize, our contributions are as follows. We develop a kernel based optimistic algorithm for the infinite horizon average reward setting, referred to as KUCB-RL. 
We establish no-regret guarantees for the proposed learning algorithm, which is the first for this setting to the best
of our knowledge. 
Specifically, in Theorem~\ref{the:main}, we prove a regret bound of $\Oc\left( \frac{T}{w} 
    + \left(w+\frac{w}{\sqrt{\rho}}\sqrt{\gamma(T;\rho)+\log(\frac{T}{\delta})}\right)
    \sqrt{\rho T\gamma(T;\rho) +\rho^2w^2\gamma(T;\rho)\gamma(T/w;\rho)}
    \right)$, at a $1-\delta$ confidence level,
where $\rho$ is the parameter of kernel ridge regression and $\gamma(T; \rho)$ is the maximum information gain, a kernel specific complexity term (see Section~\ref{sec:PF}). 
This regret bound translates to $\Oct\pa{d^{\frac{1}{2}}T^\frac{3}{4}}$ in the special case of a linear model, recovering the best existing results~\citep{wei2021learning} in dependence on $T$, and improving by a factor of $d^{\frac{1}{4}}$. When applied to very smooth kernels with exponential eigendecay such as the Squared Exponential (SE) kernel, we obtain a regret of $\Oct(T^{\frac{3}{4}})$, with the notation $\Oct$ hiding logarithmic factors. For one of the most general cases, the kernels with polynomial eigendecay with parameter $p>1$ (See Definition~\ref{def:pol}), that includes, for example, the Mat{\'e}rn family and NT kernels, 
we show that our regret bound translates to $\Oct(T^{\frac{3p+5}{4p+4}})$, which constitutes a no-regret guarantee. 
To highlight the significance of this result, we point out that no-regret guarantees for GP-UCB in the degenerate case of bandits were established only recently in~\cite{whitehouse2024sublinear}, while the initial studies of GP-UCB (as well as GP-TS)~\citep{srinivas2010, chowdhury2017kernelized} did not provide no-regret guarantees for the case of polynomial eigendecay.  
As part of our analysis, in Theorem~\ref{the:conf}, we develop a novel confidence interval applicable across kernel-based RL problems that contributes to the eventual improved results.
\subsection{Related Work}
The vast RL literature can be categorized across various dimensions. In addition to the average reward, episodic, and discounted settings, as well as tabular, linear, and kernel-based structures mentioned above, other notable distinctions among settings include model-based versus model-free approaches, and offline versus online versus settings where the existence of a generative model is assumed (allowing the learning algorithm to sample the state-action of its choice at each step, rather than following the Markovian trajectory). Covering the entire breadth of RL literature is challenging. Here, we will focus on highlighting and providing comparisons with the most closely related works, particularly in terms of their setting and structure.

The kernel-based MDP structure has been considered in several recent works under the episodic setting~\citep{yang2020provably, vakili2024kernelized, chowdhury2023value, domingues2021kernel, vakili2024open}. The regret bound proven in~\cite{yang2020provably} for the episodic setting applies only to very smooth kernels such as SE kernel. \cite{vakili2024kernelized} addressed this limitation by extending the results to Mat{\'e}rn and NT families of the kernels, albeit with a sophisticated algorithm that actively partitions the state-action domain into possibly many subdomains, using only the observations within each subdomain to obtain kernel-based prediction and uncertainty estimates. Their work is also based on a particular assumption that relates the kernel eigenvalues to the size of the domain. The work of \cite{chowdhury2023value} is most closely related to ours in terms of kernel-related assumptions. Specifically, our Assumption~\ref{ass:opt_clos} is identical to Assumption $1$ of \cite{chowdhury2023value}.  They establish a regret bound of $\mathcal{O}(H\gamma(N;\rho)\sqrt{N})$ for the episodic MDP setting, where $N$ is the number of episodes, $\gamma(N;\rho)$ is the maximum information gain, a kernel-related complexity term,  $H$ is the episode length and the value of $\rho$ is a fixed  constant close to~$1$. However, their regret bounds do not apply to general families of kernels, such as those with polynomially decaying eigenvalues (see Section~\ref{s:kernel_mod} for the definition) including Mat{\'e}rn and NT kernels, as for this family of kernels $\gamma(N;\rho)$  possibly grows faster than $\sqrt{N}$. As a result, a no-regret guarantee cannot be established in many cases of interest. 
In comparison, the infinite horizon setting considered in this work is more challenging than the episodic setting as evident when comparing these settings with linear modeling. For this more challenging setting, we establish no-regret guarantees. A key element of our improved results is the novel confidence interval we utilize in our analysis (Theorem~\ref{the:conf}). This result is general and can be used across RL problems, for example, improving the results of \cite{chowdhury2023value} as well.

In the tabular case, a lower bound of $\Omega(\sqrt{D|\Sc||\Ac| T})$ on regret was established in \cite{auer2008near} in the infinite-horizon average-reward setting, where $D$ is the diameter of the MDP. For ergodic MDPs, \cite{wei2020model} shows a regret bound of $\Oct(\sqrt{t_{\text{mix}}^3 |\Sc||\Ac| T})$, where $t_{\text{mix}}$ is the mixing time of an ergodic MDP. Furthermore, under the broader assumption of weakly communicating MDPs, which is necessary for low regret~\citep{bartlett2012regal}, the best existing regret bound of model-free algorithms is $\Oct(|\Sc|^5 |\Ac|^2 \sqrt{T})$, achieved by the recent work of \cite{zhang2023sharper}.
Several works have studied linear function approximation in the infinite horizon average reward setting under strong
assumptions of uniformly mixing and uniformly excited feature conditions \citep{ abbasi2019politex, abbasi2019exploration, hao2021adaptive}.  Notably, \cite{hao2021adaptive} achieved a regret bound of $\Oct\pa{\frac{1}{\sigma}\sqrt{t_{\text{mix}}^3 T}}$ under the linear bias function assumption, where 
 $\sigma$ is the smallest eigenvalue of policy-weighted covariance matrix. Under the much less restrictive setting of Bellman optimality equation assumption (Assumption~\ref{a:1}) for linear MDP, \cite{wei2021learning} provides an algorithm with regret guarantee of $\Oct((dT)^{3/4})$. We also consider our kernel-based approach under this general assumption on MDP. 
Furthermore, for examples of infeasible algorithms in the literature, see~\cite{wei2021learning}, Algorithm~$1$. 
There also exists a separate model-based approach to the problem where the transition probability distribution (model) is learned and used for planning, usually requiring high memory and computational complexity and utilizing substantially different techniques and assumptions. While this approach is studied under tabular settings~\citep{bartlett2009regal, auer2008near} and linear settings~\citep{wu2022nearly}, it is not clear whether model-based approaches can be feasibly constructed in the kernel-based setting, due to the space complexity of a kernel-based model.


Our work is also related to the simpler problem of kernelized bandits \citep{srinivas2010, chowdhury2017kernelized, vakili2021information,  li2021gaussian, salgia2021domain}. Our construction of the confidence interval for the RL setting has been inspired by the previous work on bandits, utilizing novel analysis introduced in \cite{whitehouse2024sublinear}.
Bandit settings can be considered a degenerate case of the RL framework with $|\mathcal{S}|=1$. In comparison, the temporal dependencies of MDP introduce substantial challenges, and the confidence intervals used in the bandit setting cannot be directly applied.

We summarize the most closely related work with a focus on model-free feasible algorithms in Table~\ref{table:results}. We present the existing regret bounds under various assumptions on MDP and its structure (tabular, linear, kernel-based). The assumptions include weakly communicating MDP~\cite[See][Section 8.3.1]{puterman1990markov}, Bellman optimality equation (our Assumption~\ref{a:1}), and uniform mixing assumption~\cite[see][Assumption 3]{wei2021learning}. For a formal definition of linear MDP, see \cite{wei2021learning}, Assumption 2, and for the linear bias function case, see \cite{wei2021learning}, Assumption 4.

\begin{table}[h]
\centering
\caption{Summary of the existing regret bounds in the infinite horizon average reward setting under various cases with respect to MDP structure (tabular, linear, kernel based) and assumptions.}
\label{table:results}
\resizebox{\textwidth}{!}{
\begin{tabular}{@{}lcll@{}}
\toprule
\textbf{Algorithm} & \textbf{Regret} & \textbf{MDP Assumption} & \textbf{Structure} \\ 
\midrule
UCB-AVG \citep{zhang2023sharper} & $\Oct(|\Sc|^5 |\Ac|^2 \sqrt{T})$ & Weakly Communicating & Tabular \\ 
OLSVI.FH \citep{wei2021learning} & $\Oct((dT)^{3/4})$ & Bellman Optimality Eq. & Linear \\
MDP-Exp2 \citep{wei2021learning} & $\Oct(\sqrt{t_{\text{mix}}^3 |\Sc||\Ac| T})$ & Uniform Mixing & Linear Bias function~ \\
\textbf{KUCB-RL} (Algorithm~\ref{alg:main}) & $\Oct\left(d^{\frac{1}{2}}T^{\frac{3}{4}}\right)$ & Bellman Optimality Eq. & Linear \\
\textbf{KUCB-RL} (Algorithm~\ref{alg:main}) & $\Oct\left(T^{\frac{3}{4}}\right)$ & Bellman Optimality Eq. & Kernel-based (exponential) \\
\textbf{KUCB-RL} (Algorithm~\ref{alg:main}) & $\Oct\left(T^{\frac{3p+5}{4p+4}}\right)$ & Bellman Optimality Eq. & Kernel-based (polynomial) \\
\bottomrule
\end{tabular}
}
\end{table}

\section{Problem Formulation}\label{sec:PF}

In this section, we overview the background on infinite horizon average reward (undiscounted) MDPs and kernel based modelling. 

\subsection{Infinite Horizon Average Reward MDP}

An undiscounted MDP is described by the tuple $(\Sc,\Ac, r, P)$ where $\Sc$ is a state space with a possibly infinite number of elements, $\Ac$ is a finite action set, $r:\Sc\times \Ac\rightarrow [0,1]$ is the reward function, and $P(\cdot|s,a)$ is the unknown transition probability distribution over $\Sc$ of the next state when action $a$ is selected at state $s$. Throughout the paper we use the notation $z=(s,a)$ for the state-action pairs, and $\Zc=\Sc\times \Ac$. 

The learner interacts with the MDP through $T$ steps, starting from an arbitrary initial state $s_1\in\Sc$. At each step $t$, the learner observes state $s_t$ and takes an action $a_t$ resulting in a reward $r(s_t, a_t)$. The next state $s_{t+1}$  is revealed as a sample drawn from the transition probability distribution: $s_{t+1}\sim P(\cdot|s_t,a_t)$. 

The goal of the learner is to compete against any fixed stationary policy.  A stationary policy $\pi:\Sc\rightarrow \Ac$ is a possibly random mapping from the states to actions. 
The long-term average reward of a stationary policy $\pi$, starting from state $s \in \Sc$, is defined as:
\begin{equation*}
J^{\pi}(s) =  \liminf_{T\to \infty} \frac{1}{T}\EEcc{\sum_{t=1}^T r(s_t, a_t)}{s_1 = s, \forall t \ge 1, a_t = \pi(s_t), s_{t+1}\sim P(\cdot|s_t,a_t)}. 
\end{equation*}
We assume that the MDP belongs to the broad class of MDPs where the following form of the Bellman optimality equation holds: 
 \begin{assumption}[Bellman optimality equation]\label{a:1}
     There exists $J^{\star}\in \real$ and bounded measurable functions $v^{\star}: \Sc \to \real$ and $q^{\star}:\Sc \times \Ac \to \real$ such that the following conditions are satisfied for all states $s\in \Sc$ and actions $a\in \Ac$ :
     \begin{equation}\label{eq:assumption}
         J^{\star} + q^{\star}(s,a) = r(x,a)+\EEs{v^{\star}(s')}{s'\sim P(\cdot|s,a)}, \quad
         v^{\star}(s)=\max_{a\in \Ac} q^{\star}(s,a).
     \end{equation}
 \end{assumption}
This assumption was also used for the linear MDP case in \cite{wei2021learning}.
By applying the Bellman optimality equation,  it can be shown that a policy $\pi^{\star}(s)=\argmax_{a\in\Ac}q^{\star}(s,a)$, which deterministically selects actions that maximize $q^{\star}$ in the current state, is the optimal policy $\pi^{\star}=\argmax_{\pi}J^{\pi}$, with $J^{\pi^{\star}}(s) = J^{\star}$, for all $s$ \citep{wei2021learning}.

For the finite state setting, Assumption~\ref{a:1} follows from the weakly communicating MDP assumption~\cite[see, e.g.,][Chapter~$9$]{puterman1990markov}.
Assumption~\ref{a:1} also holds under several other common conditions (\cite{hernandez2012adaptive}, Section 3.3).  

The learner's performance is measured by \emph{regret}, which is defined as the difference in total reward between the learner and the optimal stationary policy. Specifically,
\begin{equation}\label{eq:reg}
    \Rc(T) =  \sum_{t=1}^T (J^{\star} - r(s_t, a_t)).
\end{equation}
We emphasize that under Assumption~\ref{a:1}, for any initial state $s_1\in\Sc$, $J^{\pi^{\star}}(s_1)=J^{\star}$, that is reflected in our regret definition.

For any value function $v:\Sc\rightarrow \Rr$, 
throughout the paper, we use the notation
\begin{equation}\nonumber
    [Pv](z)=\E_{s'\sim P(\cdot|z)}[v(s')]
\end{equation}
for the expected value function of the next state.

 

\subsection{Kernel-Based Models and the RKHS}\label{s:kernel_mod}

Consider a positive definite kernel $k: \Zc\times\Zc\rightarrow \Rr$.  
Let $\Hc_k$ be the reproducing kernel Hilbert space (RKHS) induced by $k$, where $\Hc_k$ contains a family of functions defined on $\Zc$. Let $\langle\cdot, \cdot\rangle_{\Hc_k}: \Hc_k \times \Hc_k \rightarrow \mathbb{R}$ and $\|\cdot\|_{\Hc_k}: \Hc_k \rightarrow \mathbb{R}$ denote the inner product and the norm of~$\Hc_k$, respectively. 
The reproducing property implies that for all $f\in\Hc_k$, and $z\in\Zc$, $\langle f, k(\cdot,z)\rangle_{\mathcal{H}_k}=f(z)$. Mercer theorem implies that $k$ can be represented using a possibly infinite dimensional feature map: 
\begin{eqnarray}\label{eq:Mercer}
k(z,z')=\sum_{m=1}^{\infty}\lambda_m\varphi_m(z)\varphi_m(z'),
\end{eqnarray}
where $\lambda_m>0$, and $\sqrt{\lambda_m}\varphi_m\in\Hc_k$ form an orthonormal basis of $\Hc_k$. In particular, any $f\in\Hc_k$ can be represented using this basis and weights $w_m\in\Rr$ as
\begin{eqnarray}\nonumber
f =\sum_{m=1}^{\infty}w_m\sqrt{\lambda_m}\varphi_m,
\end{eqnarray}
where $\|f\|^2_{\Hc_k}=\sum_{m=1}^\infty w_m^2$.
A formal statement and the details are provided in Appendix~\ref{app:rkhs}. We refer to $\lambda_m$ and $\varphi_m$ as (Mercer) eigenvalues and eigenfunctions of kernel $k$, respectively.


\subsection{Kernel-Based Prediction}\label{sec:ker_per}

Kernel-based models provide powerful predictors and uncertainty estimators which can be leveraged to guide the RL algorithm. In particular, consider a fixed unknown function $f\in\Hc_k$. Assume a $t\times 1$ vector of noisy observations $\bm{y}_t=[y_i=f(z_i)+\varepsilon_i]_{i=1}^t$ at observation points $\{z_i\}_{i=1}^t$ is provided, where $\varepsilon_i$ are independent zero mean noise terms. Kernel ridge regression provides the following predictor and uncertainty estimate, respectively~\citep[see, e.g.,][]{scholkopf2002learning},
\begin{align}\newline
\hat{f}_t(z) &= k^{\top}_{t}(z)(K_{t}+\rho I)^{-1}\bm{y}_t,\nonumber\\
\sigma_t^2(z)&=k(z,z)-k^{\top}_{t}(z)(K_{t}+\rho I)^{-1}k_{t}(z), \label{eq:KRR}
\end{align}
where $k_{t}(z)=[k(z,z_1),\dots, k(z,z_t)]^{\top}$ is a $t\times 1$ vector of the kernel values between $z$ and observations, $K_{t}=[k(z_i,z_j)]_{i,j=1}^t$ is the $t\times t$ kernel matrix, $I$ is the identity matrix appropriately sized to match $K_t$, and $\rho>0$ is a free regularization parameter.

Confidence bounds of the form $|f(z)-\hat{f}_t(z)|\le \beta(\delta)\sigma_t(z)$ are established, for a confidence interval width multiplier $\beta(\delta)$ at a confidence level $1-\delta$, which depends on the assumptions on the setting and the noise. We will establish a such confidence interval specific to the RL setting, in Theorem~\ref{the:conf}, and utilize it in our regret analysis. 

\subsection{Kernel-Based Modelling in RL}

In our RL setting, we use a kernel-based model to predict the expected value function. In particular, for a given transition probability distribution $P(s'|\cdot,\cdot)$ and a value function $v:\Sc \rightarrow \Rr$, we define $f=[Pv]$ and use past observations to form predictions and uncertainty estimates for $f$, as detailed in the following section. The value functions vary due to the Markovian nature of the temporal dynamics. To effectively use the confidence intervals established by the kernel-based models on $f$, we require the following assumption.
\begin{assumption}\label{ass:RKHS_normP}
We assume $P(s'|\cdot, \cdot)\in \Hc_k$, for some positive definite kernel $k$, and $\|P(s'|\cdot,\cdot)\|_{\Hc_k}\le 1$ for all $s'\in\Sc$.
\end{assumption}



\subsection{Eigendecay and Information Gain}

Our regret bounds are presented in terms of maximum information gain which is a kernel-specific complexity term. Specifically, for a kernel $k$ and sets of observation points $\{z_i\}_{i=1}^{t}$, we define the maximum information gain $\gamma(t;\rho)$ as follows
\begin{equation}\nonumber
    \gamma(t;\rho)=\sup_{\{z_i\}_{i=1}^{t}\subset \Zc}\frac{1}{2}\log\det\pa{I+\frac{K_t}{\rho}},
\end{equation}
where $\rho>0$, and $K_t$ is the kernel matrix defined in Section~\ref{sec:ker_per}. Several works have established upper bounds on $\gamma(t;\rho)$. In the special case of a $d$-dimensional linear kernel, we have $\gamma(t;\rho)=\Oc(d\log(t))$. For kernels with exponential eigendecay, including SE, $\gamma(t;\rho)=\Oc(\text{polylog}(t))$~\citep{srinivas2010, vakili2021information}. For kernels with polynomial eigendecay, which represent a crucial case due to challenges in establishing no-regret guarantees in RL and bandits, and include kernels of both practical and theoretical interest such as the Mat{\'e}rn family and NT kernels, we first provide the definition below and then the bound on~$\gamma$. 
\begin{definition}\label{def:pol}
A kernel $k$ is said to have a $p$-polynomial eigendecay if $\forall m\ge 1$, $\lambda_m\le Cm^{-p}$, for some $p>1$, $C>0$ where $\lambda_m$ are the Mercer eigenvalues of the kernel in decreasing order. 
\end{definition}
For kernels with $p$-polynomial eigendecay, we have~\cite[][Corollary~1]{vakili2021information}:
\begin{equation}\nonumber
    \gamma(t;\rho)=\Oc\left(\pa{\frac{t}{\rho}}^{\frac{1}{p}}\pa{\log\pa{1+\frac{t}{\rho}}}^{1-\frac{1}{p}}\right).
\end{equation}





\section{KUCB-RL Algorithm}\label{sec:alg}

In this section, we introduce our algorithm, Kernel-based Upper Confidence Bound for Reinforcement Learning (KUCB-RL). The algorithm's structure is similar to acquisition-based kernel bandit algorithms such as GP-UCB~\citep{srinivas2010}, where each action is chosen as the maximizer of an acquisition function. We construct an optimistic proxy $q_t$ for the state-action value function. At each step $t$, given the current state $s_t$, the action $a_t$ is selected as the maximizer of $q_t(s_t, a)$ over $a$. This proxy $q_t$ is derived using past observations of transitions, employing kernel ridge regression to provide a prediction and uncertainty estimate for the state-action value function over a future window of size $w \in \mathbb{N}$. The proxy is established as an upper confidence bound, following the principle of optimism in the face of uncertainty. The value functions are computed in batches of $w$ steps, and the derived policies are unrolled over the subsequent $w$ steps. The details are presented next.

We define a fixed window size, $w\in \Nn$, which represents the future interval that the algorithm will consider.
For a given $t_0$ where $(t_0 \mod w)=0$, including $t_0=0$, we initialize $v_{t_0+w+1}(s)=0, \forall s\in\Sc$, reflecting the algorithm's consideration of the reward within this future window of size $w$. Subsequently, we recursively obtain proxies $q_t$ and $v_t$ for all steps $t\in\{t: t_0+1\le t\le t_0+w\}$.
Let $f_t$ denote $[Pv_{t+1}]$, $\hat{f}_t$ represent
the kernel ridge predictor of $[Pv_{t+1}]$, and $\sigma_t$ be its uncertainty estimator. The predictor and the uncertainty estimator are derived using the data set $\Dc_{t_0}$, which contains observations of past transitions up to $t_0$. We use the notation
$\Dc_t=\{(s_j,a_j,s_{j+1})\}_{j\le t}$ for the past transitions, and also define $\bm{v}_{{t+1}, t_0} = [v_{t+1}(s_2), v_{t+1}(s_3), \cdots, v_{t+1}(s_{t_0+1})]^{\top}$, for the values of the proxy value function at the history of state observations. We then have
\begin{align}\nn
    &\hat{f}_t(z) = k^{\top}_{t_0}(z)\left(K_{t_0}+\rho I\right)^{-1}\bm{v}_{t+1, t_0},\\\label{eq:KRRRL}
    &\sigma^2_t(z) = k(z,z) - k^{\top}_{t_0}(z)\left(K_{t_0}+\rho I\right)^{-1}k_{t_0}(z),
\end{align}
where $k_{t}(z)=[k(z,z_1), k(z,z_2), \cdots, k(z,z_{t}))]^{\top}$ denotes the vector of kernel values between $z$ and $(z_j=(s_j,a_j))_{j\le t}$ in the history of observations, and $K_t=[k(z_i,z_j)]_{i,j=1}^t$ denotes the kernel matrix.  

\begin{algorithm}[t]
\caption{Kernel-based Upper Confidence Bound for Reinforcement Learning (KUCB-RL)}\label{alg:main}
\begin{algorithmic}[1]
\Require Regularization parameter $\rho$, window size $w$, confidence interval width multiplier $\beta$, confidence level $1-\delta$, $\Sc, \Ac, r$.
\For{$t=0, 1,2,\cdots$}
\If{$(t \mod w)= 0$}
\State Let $v_{t+w+1}=\bm{0}$;
\For{$h=1,2,\cdots, w$}
\State Compute $q_{t+w+1-h}$ and $v_{t+w+1-h}$ using equations~\eqref{eq:qt} and~\eqref{eq:vt}.
\EndFor
\EndIf
\State Select $a_t = \argmax_{a\in\Ac}q_{t}(s_t,a)$; ~~~ Observe $s_{t+1}\sim P(\cdot|s_t,a_t)$ and 
 receive $r(s_t,a_t)$.
\EndFor

\end{algorithmic}
\end{algorithm}

Equipped with the kernel ridge predictor and uncertainty estimator, we define $q_t$ as an upper confidence bound for $f_t$, as follows:
\begin{align}\label{eq:qt}
    q_t(z) = \Pi_{[0,w]} \left(r(z)+ \hat{f}_t(z)+\beta(\delta)\sigma_t(z)\right), ~~~\forall z\in\Zc,
\end{align}
where $1-\delta$ represents a confidence level, and $\beta(\delta)$ is a confidence interval width multiplier; the specific value of which is given in Theorem~\ref{the:main}. The notation $\Pi_{[a,b]}(\cdot)$ is used for projection on $[a,b]$ interval. This step is natural since with the assumption $r:\Zc\rightarrow [0,1]$ the value over a window of size $w$ can not be more than $w$.
We also define
\begin{align}\label{eq:vt}
    v_t(s) = \max_{a\in\Ac}q_t(s,a), ~~~\forall s\in\Sc.
\end{align}

By iteratively updating from $t = t_0 + w$ to $t = t_0 + 1$, we compute the values of $q_t$ and $v_t$ for all $t$ from $t_0+1$ to $t_0 + w$. Then, we unroll the learned policy over the subsequent $w$ steps, as the greedy policy with respect to $q_t$:
\begin{equation}\label{eq:action_select}
    a_t = \argmax_{a\in\Ac}q_t(s_t, a).
\end{equation}
A pseudocode is provided in Algorithm~\ref{alg:main}.

\paragraph{Computational Complexity.} KUCB-RL enjoys a polynomial computational complexity of $\Oc(\frac{T^4}{w})$, where the bottleneck is the matrix inversion step in~\eqref{eq:KRRRL} in kernel ridge regression every $w$ steps. This is not unique to our work and is common across kernel-based supervised learning, bandit, and RL literature. Luckily, sparse approximation methods such as Sparse Variational Gaussian Processes (SVGP) or the Nystr\"om method significantly reduce the computational complexity of matrix inversion step (to as low as linear in some cases), while maintaining the kernel-based confidence intervals and, consequently, the eventual rates~\citep[see, e.g.,][and references therein]{vakili2022improved}. These results are, however, generally applicable and not specific to our problem. 






\section{Regret Bounds for KUCB-RL}\label{sec:anal}

In this section, we provide analytical results on the performance of KUCB-RL. 
We prove the first sublinear regret bounds in undiscounted RL setting under general assumptions based on kernel-based modelling. 
We first derive a novel confidence interval that is broadly applicable to the kernel-based RL problems. We then utilize this result to establish bounds on the regret of KUCB-RL.

\subsection{Confidence Intervals for Kernel Based RL}

The analysis of our algorithm utilizes confidence intervals of the form $|f_t(z) - \hat{f}_t(z)|\le \beta(\delta)\sigma_t(z)$, where $f_t = [Pv_t]$ denotes the expected value of a value function $v_t$, and $\hat{f}_t$ and $\sigma_t$ represent the kernel ridge predictor and the uncertainty estimate of $f_t$. Here, $\beta(\delta)$ represents the width multiplier for the confidence interval at a $1-\delta$ confidence level. Similar confidence intervals are established in kernel ridge regression for a fixed function $f$ in the RKHS of a specified kernel $k$~\citep[see, e.g.,][]{abbasi2013online,
srinivas2010,
chowdhury2017kernelized,vakili2021optimal, whitehouse2024sublinear}. In the RL context, specific considerations are required as both $f_t = [Pv_t]$ and the observation noise depend on the value function $v_t$ that varies due to the Markovian nature of the temporal dynamics. We note that in this setting, for a given value function $v:\Sc\rightarrow \Rr$, the observation noise is captured by $v(s_{t+1})-[Pv](s_t,a_t)$. A possible approach involves deriving confidence intervals that apply to a class~$\Vc$ of value functions. Such results appear in some of the existing work~\citep{chowdhury2023value, vakili2024kernelized}. The result most closely related to our is ~\cite{chowdhury2023value}, which derives its confidence interval under the exact same kernel related assumptions as our work, but for the episodic MDP setting. With the same assumptions, the confidence interval that we establish is different from the one in~\cite{chowdhury2023value}.
In particular, their confidence interval is applicable to a specific value of kernel ridge regression parameter $\rho$, constrained by their proof techniques. Inspired by~\cite{whitehouse2024sublinear}, which established a confidence interval for kernel ridge regression (not within the RL context) but allowed for a judicious selection of $\rho$, we prove a new confidence interval suitable for the RL setting that allows tuning parameter $\rho$. As a result, we obtain the first improved no-regret algorithms in this setting.

\begin{theorem}[Confidence Bound]\label{the:conf}

Consider $v:\Sc\rightarrow \Rr$, a conditional probability distribution $P(s|z)$, $s\in\Sc$, $z\in\Zc$, and two positive definite kernels $k:\Zc\times \Zc\rightarrow \Rr$ and $k':\Sc\times\Sc\rightarrow \Rr$, where $\Zc=\Sc\times\Ac$ is compact subset of $\Rr^d$. Let $f=[Pv]$ and assume $\|v\|_{\Hc_{k'}}\le C_v$, $v(s)\le w, \forall s\in\Sc$, and $\|f\|_{\Hc_{k}}\le C_f$, for some $C_v,w,C_f>0$. A dataset $\{(z_{i}, s'_i)\}_{i=1}^n\subset (\Zc\times\Sc)^n$ is provided such that $s'_i\sim P(\cdot|z^i)$. Let $\lambda_m$, $m=1,2,\cdots$ denote the Mercer's eigenvalues of $k'$ in a decreasing order and $\psi_m$ denote the corresponding eigenfunctions, with $\psi_m\le \psi_{\max}$ for some $\psi_{\max}>0$.

Let $\hat{f}_n$ and $\sigma_n$ be the kernel ridge predictor and the uncertainty estimate of $f$ using the observations:
\begin{equation*}
\hat{f}_n(z) = k^{\top}_n(z)(\rho I+K_n)^{-1}\bm{v}_n,
   ~~~~~ \sigma_n^2(z) = k(z,z) - k^{\top}_n(z)(\rho I+K_n)^{-1}k_n(z),
\end{equation*}
where $\bm{v}_n=[v(s'_1), v(s'_2), \cdots, v(s'_n))]^{\top}$ is the vector of observations.

For all $z\in\Zc$ and $v: \|v\|_{\Hc_{k'}}\le C_v$,
the following holds, with probability at least $1-\delta$,
\begin{equation*}
     |f(z) - \hat{f}_n(z)| \le \beta(\delta) \sigma_n(z),
\end{equation*}
with $\beta(\delta) = $
\begin{equation*}
C_f +\frac{C_v\psi_{\max}}{\sqrt{\rho}}\left(\sum_{m=1}^M\lambda_m\right)^{\frac{1}{2}}\left(2\log\left(\sqrt{\frac{M}{\delta} \det(I+\rho^{-1}K_n)}\right)\right)^{\frac{1}{2}} +
\frac{2C_v\psi_{\max}}{\sqrt{\rho}}\left(n\sum_{m=M+1}^{\infty}\lambda_m\right)^{\frac{1}{2}}.
\end{equation*}

\end{theorem}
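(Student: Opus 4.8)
The plan is to follow the classical bias–noise split for kernel ridge regression, but to handle the noise term through the Mercer decomposition of the value function rather than through a covering of the value-function class. Writing $\phi$ for a feature map of $k$, $\Phi_n=[\phi(z_1),\dots,\phi(z_n)]$, and $V_n=\rho I+\Phi_n\Phi_n^\top$ for the regularized Gram operator in feature space, I would first record the two identities that drive everything: $\sigma_n^2(z)=\rho\,\phi(z)^\top V_n^{-1}\phi(z)$ and the push-through identity $\Phi_n(\rho I+K_n)^{-1}=V_n^{-1}\Phi_n$. Decomposing $f(z)-\hat f_n(z)$ into a bias part $f(z)-k_n^\top(z)(\rho I+K_n)^{-1}\bm f_n$ with $\bm f_n=[f(z_1),\dots,f(z_n)]^\top$, and a noise part $-k_n^\top(z)(\rho I+K_n)^{-1}\bm\varepsilon_n$ with $\varepsilon_i=v(s_i')-f(z_i)$, the bias part equals $\rho\,\phi(z)^\top V_n^{-1}\theta_f$ for the feature-space representer $\theta_f$ of $f$ with $\norm{\theta_f}=\norm{f}_{\Hc_k}\le C_f$. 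Cauchy–Schwarz in the $V_n^{-1}$ inner product together with $\theta_f^\top V_n^{-1}\theta_f\le C_f^2/\rho$ and the first identity then gives the clean bound $C_f\,\sigma_n(z)$, which is exactly the leading term of $\beta(\delta)$.

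The crux is the noise term. The key observation is that $\varepsilon_i$ is a martingale difference with respect to the natural filtration $\mathcal F_{i-1}=\sigma(z_1,s_1',\dots,z_{i-1},s_{i-1}',z_i)$, since $\EEs{v(s_i')}{s_i'\sim P(\cdot\mid z_i)}=[Pv](z_i)=f(z_i)$. Expanding $v$ in the Mercer basis of $k'$ as $v=\sum_m w_m\sqrt{\lambda_m}\psi_m$ with $\sum_m w_m^2=\norm{v}_{\Hc_{k'}}^2\le C_v^2$ yields the exact decomposition $\varepsilon_i=\sum_m w_m\sqrt{\lambda_m}\,\eta_{i,m}$, where $\eta_{i,m}=\psi_m(s_i')-[P\psi_m](z_i)$ depends only on the fixed basis and the data, not on the particular $v$. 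This is what buys uniformity over the whole ball $\{\norm{v}_{\Hc_{k'}}\le C_v\}$ without any union bound over functions: I would concentrate the basis-indexed martingale sums once and let the $v$-dependence enter only through the deterministic coefficients $w_m$. I would truncate at level $M$, splitting the noise as $k_n^\top(z)(\rho I+K_n)^{-1}\bm\varepsilon_n^{\le M}+k_n^\top(z)(\rho I+K_n)^{-1}\bm\varepsilon_n^{>M}$, and bound the two pieces separately.

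For the head, the push-through identity turns the first piece into $\phi(z)^\top V_n^{-1}S_n^{\le M}$ with $S_n^{\le M}=\sum_{m\le M}w_m\sqrt{\lambda_m}S_{n,m}$ and $S_{n,m}=\sum_{i}\eta_{i,m}\phi(z_i)$; two applications of Cauchy–Schwarz (first in $V_n^{-1}$, then across $m$ using $\sum w_m^2\le C_v^2$) give $\tfrac{\sigma_n(z)}{\sqrt\rho}\,C_v\big(\sum_{m\le M}\lambda_m\max_{m\le M}\norm{S_{n,m}}_{V_n^{-1}}^2\big)^{1/2}$. Each $\eta_{i,m}$ is conditionally mean-zero and confined to a range of width $2\psi_{\max}$, hence $\psi_{\max}$-sub-Gaussian, so I would invoke the self-normalized (method-of-mixtures) concentration bound for vector-valued RKHS martingales used in \cite{whitehouse2024sublinear} to obtain $\norm{S_{n,m}}_{V_n^{-1}}^2\le 2\psi_{\max}^2\log(\sqrt{\det(I+\rho^{-1}K_n)}/\delta_m)$, using $\det(I+\rho^{-1}\Phi_n\Phi_n^\top)=\det(I+\rho^{-1}K_n)$. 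A union bound over $m=1,\dots,M$ with $\delta_m=\delta/M$ produces exactly the $2\log\sqrt{\tfrac M\delta\det(I+\rho^{-1}K_n)}$ factor and recovers the second term of $\beta(\delta)$. For the tail, the same two identities give $\norm{k_n^\top(z)(\rho I+K_n)^{-1}}_2\le\sigma_n(z)/\sqrt\rho$, while $|\varepsilon_i^{>M}|\le 2\psi_{\max}\sum_{m>M}|w_m|\sqrt{\lambda_m}\le 2C_v\psi_{\max}(\sum_{m>M}\lambda_m)^{1/2}$ by Cauchy–Schwarz, so $\norm{\bm\varepsilon_n^{>M}}_2\le 2C_v\psi_{\max}(n\sum_{m>M}\lambda_m)^{1/2}$, yielding the third term. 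Summing the three pieces and factoring out $\sigma_n(z)$ gives $\beta(\delta)$.

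The main obstacle, I expect, is securing uniformity over the entire RKHS ball while keeping the bound tight: a naive covering of an infinite-dimensional ball of value functions would be hopeless, so the whole argument hinges on the Mercer-basis decomposition that moves the $v$-dependence entirely into the deterministic coefficients $w_m$ and leaves only $M$ basis-dependent martingales to concentrate. The accompanying delicate point is the head/tail trade-off at the truncation level $M$, where the head pays a $\log M$ union-bound penalty while the tail grows like $\sqrt{n\sum_{m>M}\lambda_m}$, and $M$ must eventually be chosen against the eigendecay to balance them. Pinning down the sub-Gaussian parameter as exactly $\psi_{\max}$ and matching the precise logarithmic-determinant constant in the cited self-normalized inequality are the remaining technical, though routine, checks.
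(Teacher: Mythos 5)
Your proposal is correct and takes essentially the same route as the paper's proof: the same bias/noise decomposition with the bias bounded by $C_f\,\sigma_n(z)$, the same Mercer-basis expansion of $v$ moving all $v$-dependence into deterministic coefficients $w_m$, the same head/tail split at $M$ with a union bound over the first $M$ basis-indexed martingales concentrated via the self-normalized inequality of \cite{whitehouse2024sublinear}, and the same Cauchy--Schwarz treatment of the tail using $\|\alpha_n(z)\|_2\le\sigma_n(z)/\sqrt{\rho}$. The only differences are presentational (you derive the variance identities in feature space where the paper cites Proposition~1 of \cite{vakili2021optimal}, and your $\ell_2$-norm bookkeeping in the tail is in fact slightly cleaner, since the paper's first tail inequality implicitly needs absolute values on the $\alpha_i(z)$).
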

We can simplify the presentation of $\beta$ under the following assumption.

\begin{assumption}\label{ass:beta}
    For the kernel $k'$, we assume that for some $C_1, C_2$ and $q>0$, $\sum_{m=1}^M\lambda_m \le C_1$ and, $\sum_{m=M+1}^\infty\lambda_m\le C_2M^{-q}$ for any $M\in \mathbb{N}$. 
\end{assumption} 
This is a mild assumption. For example, a $p$-polynomial eigendecay profile with $p>1$, which applies to a large class of common kernels including SE, Mat{\'e}rn and NT kernels, satisfies this assumption with $C_1 =\frac{pC}{p-1} $, $C_2= \frac{C}{p-1}$, and $q = p-1$, where $C$ is the constant specified in Definition~\ref{def:pol}.

\begin{remark}\label{rem:simplebeta}
    Under Assumption~\ref{ass:beta}, the expression of $\beta$ in Theorem~\ref{the:main} can be simplified as
    \begin{equation*}
        \beta(\delta) = \Oc\left(
        C_f + \frac{C_v}{\sqrt{\rho}}\sqrt{\log(\frac{n}{\delta}) +\gamma(\rho; n)}
        \right).
    \end{equation*}
\end{remark}

Remark~\ref{rem:simplebeta} can be observed by selecting $M = \lceil n^{\frac{1}{q}}\rceil$ in the expression of $\beta(\delta)$, which provides a straightforward presentation of the confidence interval width multiplier. 

The proof of Theorem~\ref{the:conf} involves the Mercer representation of $v$ in terms of $\psi_m$. The expression of the prediction error $|f(z)-\hat{f}_n(z)|$ is then decomposed into error terms corresponding to each~$\psi_m$. We then partition these terms into the first $M$ elements corresponding to eigenfunctions with the largest $M$ eigenvalues and the rest. For each of the first $M$ eigenfunctions, we obtain high probability bounds using existing confidence intervals from~\cite{whitehouse2024sublinear}. Summing up over $m$, and using a bound based on uncertainty estimates, we achieve a high probability bound---corresponding to the second term in the expression of $\beta(\delta)$. We then bound the remaining $m > M$ elements based on the decay of Mercer eigenvalues---corresponding to the third term in the expression of $\beta(\delta)$. A detailed proof is provided in Appendix~\ref{appx:conf}.

Theorem~\ref{the:conf} is presented in a self-contained way, making it broadly applicable across various RL settings. In the following section, we apply this theorem within the analysis of the infinite horizon average reward setting to obtain a no-regret algorithm. This is the first no-regret algorithm within this setting under general kernel-related assumptions.

\subsection{Regret Analysis of KUCB-RL}

The weakest assumption regarding value functions is realizability, which suggests that the optimal value function ${v^{\star}}$ either belong to the an RKHS or are at least well-approximated by its elements. In the degenerate case of bandits with $|\Sc|=1$, realizability alone is sufficient for provably efficient algorithms~\citep{srinivas2010, chowdhury2017kernelized, vakili2021optimal}. However, for general MDPs, realizability is inadequate, necessitating stronger assumptions~\citep{jin2020provably, wang2019optimism, chowdhury2023value}. Building on these works, our main assumption involves a closure property for all value functions within the following class:
\begin{equation}\label{eq:valclass}
    \Vc = \left\{
    s\rightarrow \min\left\{
    w, \max_{a\in\Ac}\left\{
    r(s,a) + \bm{\phi}^{\top}(s,a)\bm{\theta}+\beta \sqrt{\bm{\phi}^{\top}(s,a)\Sigma^{-1}\bm{\phi}(s,a)}
    \right\}
    \right\}
    \right\},
\end{equation}
where $\beta \in \real$ and $\beta > 0$, $\|\bm{\theta}\|<\infty$, and $\Sigma$ is an $\infty \times \infty$ matrix operator such that $\Sigma \succeq \rho I$ for some $\rho > 0$, and $\bm{\phi}=[\phi_1, \phi_2, \cdots]$, where $\phi_m=\sqrt{\lambda_m}\varphi_m$, and $\lambda_m$ and $\varphi_m$ are the Mercer eigenvalues and eigenfunctions corresponding to a kernel $k$ defined on $\Zc\times\Zc$. We assume $\Vc$ is a subset of the RKHS of a kernel $k'$ defined on $\Sc\times \Sc$.

\begin{assumption}[Optimistic Closure]\label{ass:opt_clos}
For any $v\in \Vc$, and for some positive constant $C_v$, we have $\|v\|_{\mathcal{}{k'}} \leq C_v$. Additionally, for $v:\Sc\rightarrow [0,w]$, we assume $C_v=\Oc(w)$.
\end{assumption}
This technical assumption is the same as Assumption~$1$ in \cite{chowdhury2023value}.
The optimistic closure assumption in the kernel-based setting is strictly weaker than the ones explored in the context of generalized linear function approximation~\citep{wang2020optimism}.

\begin{theorem}\label{the:main}
    Consider the undiscounted MDP setting described in Section~\ref{sec:PF}. Run KUCB-RL given in Algorithm~\ref{alg:main} for $T$ steps. Under Assumptions~\ref{a:1},~\ref{ass:RKHS_normP},~\ref{ass:beta}, and~\ref{ass:opt_clos}, the regret of KUCB-RL, defined in Equation~\eqref{eq:reg}, satisfies, with probability at least $1-\delta$
    \begin{equation*}
        \Rc(T)
    =\Oc\left( \frac{T}{w} 
    + \left(w+\frac{w}{\sqrt{\rho}}\sqrt{\gamma(T;\rho)+\log\pa{\frac{T}{\delta}}}\right)
    \sqrt{\rho T\gamma(T;\rho) +\rho^2w^2\gamma(T;\rho)\gamma(T/w;\rho)}
    \right).
    \end{equation*}
\end{theorem}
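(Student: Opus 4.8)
The plan is to reduce the infinite-horizon problem to $T/w$ finite-horizon problems of length $w$, one per window, and to control three error sources: horizon-truncation bias, martingale noise from the Markovian transitions, and the kernel exploration cost. First I would fix the good event: applying Theorem~\ref{the:conf} at each window start $t_0$ (dataset $\Dc_{t_0}$, so $n=t_0\le T$) and union-bounding over the $\le T/w$ windows, I obtain that, simultaneously for all $t$ and all $v\in\Vc$ from~\eqref{eq:valclass}, $|f_t(z)-\hat f_t(z)|\le\beta(\delta)\sigma_t(z)$. By Assumption~\ref{ass:opt_clos} every algorithmic $v_t$ lies in $\Vc$ with $\|v_t\|_{\Hc_{k'}}\le C_v=\Oc(w)$, so the interval applies to the (data-dependent) $v_t$, and Remark~\ref{rem:simplebeta} with $n\le T$ gives the first factor $\beta(\delta)=\Oc(w+\frac{w}{\sqrt{\rho}}\sqrt{\gamma(T;\rho)+\log(T/\delta)})$ appearing in the target bound.

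Next I would establish optimism by backward induction inside each window. With $V^\star_t,Q^\star_t$ the finite-horizon optimal value functions on the window ($V^\star_{t_0+w+1}=0$), the confidence bound gives $\hat f_t+\beta\sigma_t\ge[Pv_{t+1}]\ge[PV^\star_{t+1}]$, so after adding $r$ and projecting onto $[0,w]$ (harmless, as $0\le Q^\star_t\le w$) one gets $q_t\ge Q^\star_t$ and hence $v_t\ge V^\star_t$. Telescoping Assumption~\ref{a:1} along $\pi^\star$ yields $V^\star_w(s)\ge wJ^\star+v^\star(s)-\sup_{s'}v^\star(s')$, so $wJ^\star\le v_{t_0+1}(s_{t_0+1})+\sp(v^\star)$ at each window start. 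In the opposite direction, since $a_t=\argmax_a q_t(s_t,a)$ and the projection only helps, $v_t(s_t)=q_t(z_t)\le r(z_t)+[Pv_{t+1}](z_t)+2\beta(\delta)\sigma_t(z_t)$; introducing the martingale difference $\xi_t=[Pv_{t+1}](z_t)-v_{t+1}(s_{t+1})$ and telescoping the $v$-terms over the window gives $v_{t_0+1}(s_{t_0+1})-\sum_{\text{window}}r(z_t)\le\sum_{\text{window}}\xi_t+2\beta(\delta)\sum_{\text{window}}\sigma_t(z_t)$. Combining these and summing over all windows produces the master inequality
\[\Rc(T)\le\frac{T}{w}\sp(v^\star)+\sum_{t=1}^T\xi_t+2\beta(\delta)\sum_{t=1}^T\sigma_t(z_t),\]
whose first term is $\Oc(T/w)$ and whose martingale term is $\Oc(w\sqrt{T\log(1/\delta)})$ by Azuma--Hoeffding (as $0\le v_{t+1}\le w$), the latter dominated by the exploration term below.

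The heart of the proof is the exploration sum $\sum_t\sigma_t(z_t)$. The difficulty unique to this setting is that, by~\eqref{eq:KRRRL}, $\sigma_t$ is frozen at the window start: $\sigma_t(z)=\sigma_{t_0(t)}(z)$ throughout a window, so the usual elliptical-potential/information-gain telescoping---which needs the sequentially updated $\sigma_{t-1}$---does not apply directly. I would write $\sigma_{t_0(t)}(z_t)=\sigma_{t-1}(z_t)+(\sigma_{t_0(t)}(z_t)-\sigma_{t-1}(z_t))$. The sequential part is controlled by Cauchy--Schwarz and the standard estimate $\sum_t\sigma_{t-1}^2(z_t)=\Oc(\rho\gamma(T;\rho))$ (from $\log\det(I+K_T/\rho)=\sum_t\log(1+\sigma_{t-1}^2(z_t)/\rho)$), giving $\sum_t\sigma_{t-1}(z_t)=\Oc(\sqrt{\rho T\gamma(T;\rho)})$, the first term under the square root. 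The batching gap $\sum_t(\sigma_{t_0(t)}(z_t)-\sigma_{t-1}(z_t))$ is the crux: using $\sigma_{t_0}-\sigma_{t-1}\le\sqrt{\sigma_{t_0}^2-\sigma_{t-1}^2}$ with the determinant-ratio control $\sigma_{t_0}^2(z_t)/\sigma_{t-1}^2(z_t)\le\det(K_{t-1}+\rho I)/\det(K_{t_0}+\rho I)$ (a rarely-switching estimate, at most $w$ points being added per window) and aggregating the $T/w$ per-window information gains via Cauchy--Schwarz, I expect the bound $\Oc(\rho w\sqrt{\gamma(T;\rho)\gamma(T/w;\rho)})$, with $\gamma(T/w;\rho)$ encoding the window count. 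This yields $\sum_t\sigma_t(z_t)=\Oc(\sqrt{\rho T\gamma(T;\rho)+\rho^2 w^2\gamma(T;\rho)\gamma(T/w;\rho)})$, and multiplying by $\beta(\delta)$ and adding the $\Oc(T/w)$ bias gives the claimed regret.

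I expect the batching gap to be the principal obstacle. It is precisely the price of freezing the posterior across each window, and bounding it requires quantifying how much the kernel uncertainty at the visited points shrinks within a window relative to a fully sequential scheme; the determinant-ratio argument must absorb this without paying more than a $\gamma(T/w;\rho)$ factor. A secondary care point is the union bound ensuring the confidence interval of Theorem~\ref{the:conf} holds at every window start simultaneously while keeping $\beta(\delta)$ only logarithmically dependent on $T/\delta$, together with the (problem-constant) treatment of $\sp(v^\star)$ in the horizon-truncation term.
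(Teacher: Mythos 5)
Your overall architecture matches the paper's proof almost step for step: the good event built from Theorem~\ref{the:conf} with $C_v, C_f = \Oc(w)$, optimism via backward induction inside each window (the paper's Lemma~\ref{lem:Vstarvt0}), the span bound $|wJ^{\star}-V^{\star}_w(s)|\le \sp(v^{\star})$ (Lemma~\ref{lem:sp}), Azuma--Hoeffding for the Markov noise, and the same master inequality $\Rc(T)\le \frac{T}{w}\sp(v^{\star})+\Oc\pa{w\sqrt{T\log(1/\delta)}}+2\beta(\delta)\sum_{t=1}^T\sigma_{w\lfloor (t-1)/w\rfloor}(z_t)$. (Your single-martingale telescoping to the realized rewards is a harmless simplification of the paper's two-martingale recursion.) You also correctly identify the batched exploration sum as the crux, and your target intermediate bounds are exactly the right ones.

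The genuine gap is the tool you propose for that crux. The determinant-ratio inequality $\sigma^2_{t_0}(z_t)/\sigma^2_{t-1}(z_t)\le \det(K_{t-1}+\rho I)/\det(K_{t_0}+\rho I)$ is true but far too weak here: the right-hand side equals $\prod_{j=t_0+1}^{t-1}\pa{1+\sigma^2_{j-1}(z_j)/\rho}$, which over a window of $w$ fresh points can be as large as $(1+1/\rho)^{w}$, i.e.\ exponential in $w/\rho$ (equivalently, $\exp$ of the per-window information gain). Fixed-interval batching is \emph{not} determinant-triggered rarely switching---nothing keeps this ratio $\Oc(1)$---and in the regimes where the theorem is instantiated (e.g.\ the SE kernel with $w=T^{1/4}$ and $\rho=\Oct(1)$) the resulting multiplicative factor is $2^{\Omega(T^{1/4})}$, which destroys the claimed bound; aggregating log-determinants across windows by Cauchy--Schwarz cannot repair this, since a single window may carry essentially all of $\gamma(T;\rho)$. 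The paper's key idea, which your proposal is missing, is the variance-ratio bound of \cite{calandriello2022scaling} (Lemma~\ref{lem:unique}): $\sigma^2_{t_0}(z)/\sigma^2_{t}(z)\le 1+\sum_{j=t_0+1}^{t}\sigma^2_{t_0}(z_j)$, whose blow-up is \emph{additive} in the within-window posterior variances (hence at most $1+w$), not exponential. Combining this with Cauchy--Schwarz and the partition bound $\sum_{t=1}^T\sigma^2_{w\lfloor (t-1)/w\rfloor}(z_t)\le \frac{2w\gamma(T/w;\rho)}{\log(1+1/\rho)}$ (which you essentially already have) yields the paper's Lemma~\ref{lem:ourellipt}, i.e.\ $\sum_{t=1}^T\sigma_{w\lfloor (t-1)/w\rfloor}(z_t)=\Oc\bigl(\sqrt{\rho T\gamma(T;\rho)+\rho^2w^2\gamma(T;\rho)\gamma(T/w;\rho)}\bigr)$; with that substitution your sequential-plus-gap decomposition, and the rest of your argument, go through verbatim.
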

The proof of Theorem~\ref{the:main} utilizes standard methods from the analysis of optimistic algorithms in RL and bandits, such as the elliptical potential lemma, leverages the confidence interval proven in Theorem~\ref{the:conf}, and also incorporates novel techniques. Algorithm~\ref{alg:main} updates the observation set every $w$ steps, requiring us to characterize and bound the effect of this delay in the proof. A straightforward application of the elliptical potential lemma results in loose bounds that do not guarantee no-regret. In Lemma~\ref{lem:ourellipt}, we establish a tight bound on the sum of standard deviations of a sequence of points with delayed updates of the observation sets, contributing to the improved regret bounds. This is independently a useful result in other settings with delayed updates, such as delayed feedback settings~\citep{vakili2023delayed, kuang2023posterior} or when observations are provided in a batch~\citep{chowdhury2019batch}. The details are provided in Appendix~\ref{appx:main}.

There is an apparent trade-off in choosing the window size. Intuitively, this trade-off balances the strength of the value function against the strength of the noise. A larger $w$ is preferred to capture the long-term performance of the policy, but a larger $w$ also increases the observation noise affecting the prediction error in kernel ridge regression. The optimal window size results from an interplay between these two factors, which is reflected in the regret bound.

We next instantiate our regret bounds for some special cases. In the linear case, with a choice of $w=T^{\frac{1}{4}}d^{\frac{-1}{4}}$ and replacing the bound on $\gamma(T;\rho)$, we obtain $\Rc(T)=\Oct(d^{\frac{1}{2}}T^\frac{3}{4})$, recovering the existing results in their dependence on $T$ and improving by a factor of $d^{\frac{1}{4}}$. For kernels with exponential eigendecay, with a choice of $w=T^{\frac{1}{4}}$ and replacing the bound on $\gamma(T;\rho)$, we obtain $\Rc(T) =\Oct(T^{\frac{3}{4}})$. We formalize the result with $p$-polynomial kernels in the following remark as it may be of broader interest. 

\begin{remark}
    Under the setting of Theorem~\ref{the:main}, with a $p$-polynomial kernel, with the choice of parameters, $w=T^{\frac{p-1}{4p+4}}$ and $\rho= T^{\frac{1}{p+1}}$, we obtain the following no-regret guarantee
    $
        \Rc(T) = \Oct(T^{\frac{3p+5}{4p+4}}).
    $
\end{remark}

In the case of a Mat{\'e}rn kernel with smoothness parameter $\nu$, where $p=1+\frac{2\nu}{d}$, the regret bound translates to
$
    \Rc(T) = \Oc\left( T^{\frac{3\nu+4d}{4\nu+4d}} \right)
$. This also directly extends to NT kernels using the equivalence between the RKHS of Mat{\'e}rn kernels and NT kernels with the appropriate smoothness~\citep{vakili2023information}.

\section{Discussion and Limitations}\label{sec:conc}

We proposed KUCB-RL in the infinite horizon average reward setting and proved no-regret guarantees with general kernels, including those with polynomial eigendecay such as Mat{\'e}rn and NT kernels. To highlight the significance of our results, we note that in the case of episodic MDPs, the existing work of \citep{yang2020provably, chowdhury2023value} do not provide no-regret guarantees with general kernels. The work of \cite{vakili2024kernelized} utilizes sophisticated domain partitioning techniques and relies on a specific assumption about the scaling of kernel eigenvalues with the size of the domain. We achieve improved rates on regret leveraging a confidence interval proven in Theorem~\ref{the:conf}, which is applicable across various RL problems. We next point out two main limitations of our work.

Regarding optimality, we can juxtapose our results with the $\Omega(T^{\frac{\nu+d}{2\nu+d}})$ lower bounds proven in~\citep{scarlett2017lower}, for the degenerate case of bandits with Mat{\'e}rn kernel. Sophisticated algorithms, such as the \emph{sup} variation of optimistic algorithms and those based on sample or domain partitioning~\citep{valko2013finite, salgia2021domain, li2021gaussian}, achieve this lower bound up to logarithmic factors in the case of bandits. However, a no-regret $\Oct(T^{\frac{\nu+2d}{2\nu+2d}})$ guarantee, though suboptimal, for standard acquisition-based algorithms like GP-UCB has been provided only recently~\citep{whitehouse2024sublinear}. While we offer the first no-regret $\Oct(T^{\frac{3\nu+4d}{4\nu+4d}})$ guarantee in the much more complex setting of RL, we cannot determine whether our results are improvable. This remains an area for future investigation.

Although RKHS elements of common kernels can approximate almost all continuous functions on compact subsets of $\Rr^d$~\citep{srinivas2010}, the optimistic closure assumption is somewhat limiting. A rigorous approach involves relaxing this assumption and finding an RKHS element that serves as an upper confidence bound on a function of interest $f$ within the same RKHS. While this method appears to reasonably address the assumption, it is a technically involved problem that invites further contributions from researchers in the field.

\bibliography{references}
\bibliographystyle{abbrvnat}

\newpage

\section{Proof of Theorem~\ref{the:conf}}\label{appx:conf}

In this section, we provide a detailed proof the confidence bound given in Theorem~\ref{the:conf}.

Let us use the notation
\begin{equation}\label{eq:alphan}
    \alpha_n(z)= k^{\top}_n(z)(\rho I+K_n)^{-1},
\end{equation}
and $\varepsilon_i=v(s'_i)-f(z_i)$, $\bm{\varepsilon}_n=[\varepsilon_1, \varepsilon_2, \cdots, \varepsilon_n]^{\top}$, $\bm{f}_n=[f(z_1), f(z_2), \cdots, f(z_n)]^{\top}$. 

This allows us to rewrite the prediction error as
\begin{align}\label{eq:decomposition}
    f(z)-\hat{f}_n(z) &= f(z)-\alpha^{\top}_n(z)\bm{v}_n \nonumber \\
    & = f(z)-\alpha^{\top}_n(z)(\bm{f}_n+\bm{\varepsilon}_n) \nonumber \\
    & = \left(f(z)-\alpha^{\top}_n(z)\bm{f}_n\right) - \alpha^{\top}_n(z)\bm{\varepsilon}_n.
\end{align}
The first term on the right-hand side represents the prediction error from noise-free observations, and the second term is the prediction error due to noise. The first term is deterministic (not random) and can be bounded following the standard approaches in kernel-based models, for example using the following result from \cite{vakili2021optimal}:
\begin{lemma}[Proposition~$1$ in \cite{vakili2021optimal}]\label{lem:vakili} We have
    \[\sigma_n^2(z) = \sup_{f: \norm{f}_{\Hc}\le 1} (f(z) - \alpha_n^\top(z)f_n )^2 + \rho \norm{ \alpha_n(z)}^2_{\ell^2}.  \]
\end{lemma}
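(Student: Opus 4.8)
The plan is to prove the variational identity by recognizing the noise-free prediction error as an RKHS inner product and then applying Cauchy--Schwarz. First I would invoke the reproducing property to write $f(z) = \langle f, k(\cdot,z)\rangle_{\Hc}$ and $f(z_i) = \langle f, k(\cdot,z_i)\rangle_{\Hc}$, so that the noise-free error becomes $f(z) - \alpha_n^\top(z)\bm{f}_n = \langle f, g_z\rangle_{\Hc}$, where I define the representer residual $g_z := k(\cdot,z) - \sum_{i=1}^n [\alpha_n(z)]_i\, k(\cdot,z_i) \in \Hc$. By Cauchy--Schwarz, the supremum over the unit ball is attained at $f = g_z/\norm{g_z}_{\Hc}$, which immediately gives $\sup_{\norm{f}_{\Hc}\le 1}(f(z) - \alpha_n^\top(z)\bm{f}_n)^2 = \norm{g_z}^2_{\Hc}$.

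Next I would expand $\norm{g_z}^2_{\Hc}$ using bilinearity of the inner product together with the reproducing identity $\langle k(\cdot,z), k(\cdot,z')\rangle_{\Hc} = k(z,z')$. This yields $\norm{g_z}^2_{\Hc} = k(z,z) - 2\alpha_n^\top(z) k_n(z) + \alpha_n^\top(z) K_n \alpha_n(z)$, where the three terms come respectively from the self inner product of $k(\cdot,z)$, the cross term, and the self inner product of the linear combination $\sum_i [\alpha_n(z)]_i k(\cdot,z_i)$.

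The final step combines this with the regularization contribution. Using $\alpha_n(z) = (\rho I + K_n)^{-1}k_n(z)$, which follows from the definition in~\eqref{eq:alphan} and the symmetry of $K_n$, I would merge the quadratic pieces via $\alpha_n^\top(z)K_n\alpha_n(z) + \rho\norm{\alpha_n(z)}_{\ell^2}^2 = \alpha_n^\top(z)(K_n + \rho I)\alpha_n(z) = k_n^\top(z)(\rho I + K_n)^{-1}k_n(z)$. Substituting this and noting that $\alpha_n^\top(z)k_n(z) = k_n^\top(z)(\rho I + K_n)^{-1}k_n(z)$ collapses the whole expression to $k(z,z) - k_n^\top(z)(\rho I + K_n)^{-1}k_n(z) = \sigma_n^2(z)$, which is exactly the claimed identity.

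The argument is essentially algebraic once the RKHS-inner-product reformulation is in place, so there is no genuine analytic obstacle. The one point requiring care is the transpose and symmetry bookkeeping, so that the cross term $-2\alpha_n^\top(z)k_n(z)$ combines correctly with $\alpha_n^\top(z)(K_n + \rho I)\alpha_n(z)=k_n^\top(z)(\rho I + K_n)^{-1}k_n(z)$ to leave a single $-k_n^\top(z)(\rho I + K_n)^{-1}k_n(z)$. A conceptual subtlety worth emphasizing is that the identity holds pointwise in $z$ and the maximizer $f = g_z/\norm{g_z}_{\Hc}$ genuinely depends on $z$, which is precisely why the statement features a supremum rather than a single worst-case function $f$.
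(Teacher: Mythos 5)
Your proof is correct, and since the paper states this lemma without proof --- importing it as Proposition~1 of \citet{vakili2021optimal} --- your argument coincides with the standard derivation in that reference: represent the noise-free error as $\langle f, g_z\rangle_{\Hc}$ with the representer residual $g_z = k(\cdot,z) - \sum_{i=1}^n [\alpha_n(z)]_i k(\cdot,z_i)$, apply Cauchy--Schwarz to get $\norm{g_z}_{\Hc}^2$, and collapse $\norm{g_z}_{\Hc}^2 + \rho\norm{\alpha_n(z)}_{\ell^2}^2$ algebraically to $k(z,z) - k_n^\top(z)(\rho I + K_n)^{-1}k_n(z) = \sigma_n^2(z)$. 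The transpose bookkeeping and the observation that the maximizer $g_z/\norm{g_z}_{\Hc}$ depends on $z$ are both handled correctly, so there is nothing to fix.
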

Based on this lemma, the first term on the right hand side of (\ref{eq:decomposition}) can be deterministically bounded by $C_f\sigma_n(z)$ :
\begin{equation*}
    |f(z)-\alpha^{\top}_n(z)\bm{f}_n|\le C_f\sigma_n(z).
\end{equation*}

The challenging part in Equation~(\ref{eq:decomposition}) is the second term, which is the noise-dependent term $\alpha_n^{\top}(z)\bm{\varepsilon}_n$. Next, we provide a high probability bound on this term.

We leverage the Mercer representation of $v$ and write:
\begin{equation}\nonumber
    v(s) = \sum_{m=1}^\infty w_m\lambda_m^{\frac{1}{2}}\psi_m(s).
\end{equation}

We rewrite the observation vector $\bm{v}_n$ as the sum of a noise term and the noise-free part $f$:
\begin{align}\nonumber
    v(s'_i) = \underbrace{(v(s'_i) - f(z_i))}_{\text{Observation noise}} + \underbrace{f(z_i)}_{\text{Noise-free observation}}
\end{align}

Using the notation $\psibar_m(z)=\E_{s'\sim P(\cdot|z)}\psi(s')$, we can rewrite $f(z_i)$ as follows:
\begin{align}\nn
    f(z_i) &= 
    \E_{s\sim P(\cdot|z'_i)}[v(s)]\\\nn
    &= \E_{s\sim P(\cdot|z_i)}\left[\sum_{m=1}^\infty w_m\lambda_m^{\frac{1}{2}}\psi_m(s)\right]\\\nn
    &= \sum_{m=1}^\infty w_m\lambda_m^{\frac{1}{2}}\E_{s'\sim P(\cdot|z_i)}[\psi_m(s')]\\
    &= \sum_{m=1}^\infty w_m\lambda_m^{\frac{1}{2}}\psibar_m(z_i).
\end{align}

Using this representation, we can rewrite the second term of~\eqref{eq:decomposition} as follows
\begin{align*} \sum_{i=1}^n\alpha_i(z)\varepsilon_i
    &=\sum_{i=1}^n \alpha_i(z) \left(\sum_{m=1}^\infty w_m\lambda_m^{\frac{1}{2}}\psi_m(s'_i) - \sum_{m=1}^\infty w_m\lambda_m^{\frac{1}{2}} \psibar_m(z_i)\right)\\
    &= \sum_{m=1}^\infty w_m\lambda_m^{\frac{1}{2}}\sum_{i=1}^n\alpha_i(z)\left(\psi_m(s'_i)-\psibar_m(z_i)\right)
    \\
    &= \sum_{m=1}^M w_m\lambda_m^{\frac{1}{2}}\sum_{i=1}^n\alpha_i(z)\left(\psi_m(s'_i)-\psibar_m(z_i)\right)\\& ~~~~~ + \sum_{m=M+1}^\infty w_m\lambda_m^{\frac{1}{2}}\sum_{i=1}^n\alpha_i(z)\left(\psi_m(s'_i)-\psibar_m(z_i)\right).
\end{align*}

We decomposed the noise-related error term into an infinite series corresponding to each eigenfunction $\psi_m$, $m=1, 2, \cdots$, and partitioned that into the first $M$ elements and the rest.
For each of the first $M$ elements, we can apply the standard confidence intervals for kernel ridge regression. Specifically, Corollary $1$ in \cite{whitehouse2024sublinear} implies that,
with probability at least $1-\delta/M$, we have 
\begin{equation*}
    \sum_{i=1}^n\alpha_i(z)(\psi_m(s'_i)-\psibar_m(z_i)) \le \frac{\psi_{\max}\sigma_n(z)}{\sqrt{\rho}}
\left(2\log\left(\sqrt{\frac{M}{\delta} \det(I+\rho^{-1}K_n)}\right)\right)^{\frac{1}{2}}.
\end{equation*}
Summing up over the first $M$ elements, and using a probability union bound, with probability at least $1-\delta$, we have
\begin{align*}
    &\hspace{-2em}\sum_{m=1}^M w_m\lambda_m^{\frac{1}{2}}\sum_{i=1}^n\alpha_i(z)(\psi_m(s'_i)-\psibar_m(z_i)) \\
    &\le \sum_{m=1}^M w_m\lambda_m^{\frac{1}{2}} \frac{\psi_{\max}\sigma_n(z)}{\sqrt{\rho}}\left(2\log\left(\sqrt{\frac{M}{\delta} \det(I+\rho^{-1}K_n)}\right)\right)^{\frac{1}{2}}\\
    & \le \frac{\psi_{\max}\sigma_n(z)}{\sqrt{\rho}}
    \left(\sum_{m=1}^Mw_m^2\right)^{\frac{1}{2}}
    \left(\sum_{m=1}^M\lambda_m\right)^{\frac{1}{2}}\left(2\log\left(\sqrt{\frac{M}{\delta} \det(I+\rho^{-1}K_n)}\right)\right)^{\frac{1}{2}}\\
    & \le \frac{C_v\psi_{\max}\sigma_n(z)}{\sqrt{\rho}}\left(\sum_{m=1}^M\lambda_m\right)^{\frac{1}{2}}\left(2\log\left(\sqrt{\frac{M}{\delta} \det(I+\rho^{-1}K_n)}\right)\right)^{\frac{1}{2}},
\end{align*}
where the second inequality follows from the Cauchy-Schwarz inequality, and the third inequality follows from the bound $\|v\|_{\Hc_k}\le C_v$.

For the rest of the elements $m>M$, we have

\begin{align*}
\sum_{m=M+1}^\infty w_m\lambda_m^{\frac{1}{2}}\sum_{i=1}^n\alpha_i(z)\left(\psi_m(s'_i)-\psibar_m(z_i)\right)
&\le 
2\psi_{\max}\sum_{m=M+1}^\infty w_m\lambda_m^{\frac{1}{2}}\sum_{i=1}^n\alpha_i(z)
\\
&\le 
2\psi_{\max}\sum_{m=M+1}^\infty w_m\lambda_m^{\frac{1}{2}}\left(n\sum_{i=1}^n\alpha^2_i(z)\right)^{\frac{1}{2}}
\\
&\le \frac{2\psi_{\max}\sigma_n(z)\sqrt{n}}{\sqrt{\rho}}\sum_{m=M+1}^\infty w_m\lambda_m^{\frac{1}{2}}\\
&\le \frac{2\psi_{\max}\sigma_n(z)\sqrt{n}}{\sqrt{\rho}}\left(\sum_{m=M+1}^\infty w_m^2\right)^{\frac{1}{2}}\left(\sum_{m=M+1}^{\infty}\lambda_m\right)^{\frac{1}{2}}\\
& \le \frac{2C_v\psi_{\max}\sigma_n(z)}{\sqrt{\rho}}\left(n\sum_{m=M+1}^{\infty}\lambda_m\right)^{\frac{1}{2}}.
\end{align*}

The first inequality holds by the definition of $\psi_{\max}$. The second inequality follows from the Cauchy-Schwarz inequality. The third inequality is derived using Lemma~\ref{lem:vakili}. The fourth inequality again applies the Cauchy-Schwarz inequality, and the final inequality results from the upper bound on the RKHS norm of~$v$.

Putting all the terms together, with probability $1-\delta$, 
\begin{equation*}
    |f(z)-\hat{f}_n(z)|\le \beta(\delta)\sigma_n(z),
\end{equation*}
where $\beta(\delta) =$
\begin{equation*}
C_f +\frac{C_v\psi_{\max}}{\sqrt{\rho}}\left(\sum_{m=1}^M\lambda_m\right)^{\frac{1}{2}}\left(2\log\left(\sqrt{\frac{M}{\delta} \det(I+\rho^{-1}K_n)}\right)\right)^{\frac{1}{2}}+
\frac{2C_v\psi_{\max}}{\sqrt{\rho}}\left(n\sum_{m=M+1}^{\infty}\lambda_m\right)^{\frac{1}{2}},
\end{equation*}
that completes the proof.

\section{Proof of Theorem~\ref{the:main}}\label{appx:main}

To analyze the performance of KUCB-RL, we first define an event $\Ec$ that all the confidence intervals used in the algorithm hold true.

\begin{equation}\label{eq:event}
\Ec = \left\{|f_t(z)-\hat{f}_t(z)|\le \beta(\delta)\sigma_t(z), ~~~\forall t\in[T]
\right\},
\end{equation}
where 
\begin{equation*}
\beta(\delta) =  \Oc\left(
        w+\frac{w}{\sqrt{\rho}}\sqrt{\log\pa{\frac{T}{\delta}} +\gamma( T, \rho)}
        \right).
\end{equation*}
By Theorem~\ref{the:conf}, we have $\Pr[\Ec]\ge 1-\delta/2$. We note the under Assumption~\ref{ass:opt_clos}, $\|v\|_{\Hc_{k'}}\le C_v=\Oc(w)$. 
Also, for $v:\Sc\rightarrow[0, w]$, we have $\|Pv\|_{\Hc_k}=\Oc(w)$. See \cite{yeh2023sample}, Lemma~$3$, for a proof. Since $v_t$ is upper bounded by $w$ by construction, we have $\|Pv_t\|=\Oc(w)$ that replaces $C_f$ in the expression of $\beta$ in Theorem~\ref{the:conf}.

We condition the rest of the proof on event $\Ec$.

Consider $t_0$ such that $(t_0\mod w) = 0$ we bound the regret over window $t\in[t_0+1, t_0+w]$, denoted by $\Rc_{t_0}(w)$. In addition let $V_w^{\star}(s)$ denote the optimum achievable total reward over a window of size $w$ starting with initial state $s$, and $V_w^{\pi}(s)$ denote the total reward over a window of size $w$ achieved by KUCB-RL starting with initial state $s$.

\begin{equation}\nn
    \Rc_{t_0}(w) = wJ^{\star} - \sum_{t=t_0+1}^{t_0+w}r(s_t,a_t) = wJ^{\star} - V_w^{\star}(s_{t_0+1}) + V_w^{\star}(s_{t_0+1}) - \sum_{t=t_0+1}^{t_0+w}r(s_t,a_t).
\end{equation}

For a bounded function $v:\Sc \to \real$, we define its span as $\sp(v) = \sup_{s,s' \in \Sc}|v(s)-v(s')|$.

The first term is bounded by the span of $v^*$. 
\begin{lemma}\label{lem:sp}
    For any $s$, $|wJ^{\star} - V^{\star}_{w}(s)|\le \sp(v^*)$.
\end{lemma}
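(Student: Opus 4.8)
The plan is to relate the finite-window optimal value $V^\star_w(s)$ to the average reward $J^\star$ through the optimal bias function $v^\star$ furnished by the Bellman optimality equation (Assumption~\ref{a:1}). The key identity I would exploit is that the Bellman equation $J^\star + q^\star(s,a) = r(s,a) + [Pv^\star](s,a)$, together with $v^\star(s)=\max_a q^\star(s,a)$, lets me telescope the sum of per-step rewards collected under the optimal policy over a window of $w$ steps. Concretely, I would first observe that rolling out any policy for $w$ steps and summing the Bellman identity along the trajectory yields
\begin{equation*}
\sum_{t=1}^{w} r(s_t,a_t) = wJ^\star + \sum_{t=1}^{w}\bigl(q^\star(s_t,a_t) - [Pv^\star](s_t,a_t)\bigr),
\end{equation*}
and the expected telescoping of the $v^\star$ terms collapses the right-hand sum down to a difference of bias values at the endpoints of the window.

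The main steps, in order, are as follows. First I would fix an arbitrary start state $s=s_1$ and consider the policy achieving the finite-horizon optimum $V^\star_w(s)$ (or, more cleanly, bound $V^\star_w(s)$ above and below by rolling out the stationary optimal policy $\pi^\star$ and using optimality of $V^\star_w$). Second, I would apply the Bellman optimality equation at each step and take the conditional expectation over the next-state transitions so that $[Pv^\star](s_t,a_t)=\E_{s_{t+1}\sim P(\cdot|s_t,a_t)}[v^\star(s_{t+1})]$, turning the accumulated $q^\star - [Pv^\star]$ correction into a telescoping sum of the form $v^\star(s_t) - \E[v^\star(s_{t+1})]$ whenever the action is greedy for $q^\star$. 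Third, telescoping collapses this to $v^\star(s_1) - \E[v^\star(s_{w+1})]$, whose absolute value is at most $\sp(v^\star)=\sup_{s,s'}|v^\star(s)-v^\star(s')|$, since the difference of two values of $v^\star$ never exceeds its span. Combining gives $|wJ^\star - V^\star_w(s)|\le \sp(v^\star)$, which is exactly the claim.

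The step I expect to be the main obstacle is handling the optimal finite-window value $V^\star_w$ cleanly, since $V^\star_w$ is defined by a finite-horizon optimization whereas $v^\star,q^\star$ solve the infinite-horizon average-reward equation — the greedy-for-$q^\star$ policy is optimal for $J^\star$ but need not be exactly optimal for the $w$-step objective, so the telescoping identity must be deployed as a two-sided inequality rather than an equality. I would resolve this by arguing both directions: using $\pi^\star$ as a feasible policy gives $V^\star_w(s)\ge wJ^\star - \sp(v^\star)$ via the telescoping lower bound, and applying the Bellman equation along the \emph{optimal} $w$-step trajectory (where $q^\star(s_t,a_t)\le v^\star(s_t)$ by the max definition) gives the matching upper bound $V^\star_w(s)\le wJ^\star + \sp(v^\star)$. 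A secondary technical point is that $v^\star$ is only assumed bounded and measurable, so $\sp(v^\star)$ is finite but the telescoping must be justified under expectation rather than pathwise; this is routine given boundedness of $v^\star$.
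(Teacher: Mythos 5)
Your proof is correct and takes essentially the same route as the paper, which simply defers to Lemma~13 of \cite{wei2021learning}: that proof is exactly your two-sided telescoping of the Bellman optimality equation over the $w$-step window, using the greedy rollout of $\pi^{\star}$ (where $q^{\star}(s_t,a_t)=v^{\star}(s_t)$) for the lower bound and the inequality $q^{\star}(s_t,a_t)\le v^{\star}(s_t)$ along the optimal finite-horizon trajectory for the upper bound, with the endpoint difference $v^{\star}(s_1)-\E[v^{\star}(s_{w+1})]$ bounded by $\sp(v^{\star})$. Your handling of the subtleties (telescoping under expectation via boundedness of $v^{\star}$, and the two-sided treatment since the greedy policy need not be $w$-step optimal) matches the cited argument.
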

Proof follows the exact same lines as in the proof of Lemma~$13$ in~\cite{wei2021learning}.

We next bound the second term in $\Rc_{t_0}(w)$.
We first prove that $V_w^{\star}(s)\le v_{t_0}(s)$. 

\begin{lemma}\label{lem:Vstarvt0}
    Under event $\Ec$, we have $V_w^{\star}(s)\le v_{t_0}(s)$, $\forall s\in\Sc$.
\end{lemma}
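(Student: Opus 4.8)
The plan is to prove the lemma by backward induction over the window $[t_0+1, t_0+w]$, showing that each optimistic proxy dominates the corresponding finite-horizon optimal value. Concretely, for $h\in\{0,1,\dots,w\}$ let $V_h^\star(s)$ denote the optimum achievable total reward over $h$ steps starting from state $s$ (so $V_w^\star$ is the quantity in the statement), and associate the steps-to-go count $h$ with the absolute time index $t=t_0+w+1-h$. On the event $\Ec$ I will establish $v_t(s)\ge V_h^\star(s)$ for every $s\in\Sc$ and every $h$; instantiating the induction at the first step of the window ($h=w$, $t=t_0+1$) then yields $V_w^\star(s)\le v_{t_0+1}(s)$ for all $s$, which is the start-of-window proxy asserted by the lemma.

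The base case $h=0$ is immediate, since the algorithm initializes $v_{t_0+w+1}=\bm{0}$ and $V_0^\star\equiv 0$. For the inductive step, assume $v_{t+1}(s)\ge V_{h-1}^\star(s)$ for all $s$. The finite-horizon Bellman equation gives $V_h^\star(s)=\max_{a\in\Ac}\left(r(s,a)+[PV_{h-1}^\star](s,a)\right)$, and I will chain three facts against it. First, conditioning on $\Ec$ and using the confidence bound of Theorem~\ref{the:conf}, $\hat{f}_t(z)+\beta(\delta)\sigma_t(z)\ge f_t(z)=[Pv_{t+1}](z)$; here I must also note that the proxies $v_{t+1}$ produced by the algorithm belong to the class $\Vc$ of \eqref{eq:valclass}, so Assumption~\ref{ass:opt_clos} guarantees $\|v_{t+1}\|_{\Hc_{k'}}\le C_v$ and the confidence interval indeed applies to them (this is precisely the role of the optimistic-closure assumption). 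Second, the operator $[P\cdot]$ is monotone, since $[Pv](z)=\E_{s'\sim P(\cdot|z)}[v(s')]$ preserves pointwise inequalities, so the inductive hypothesis yields $[Pv_{t+1}](z)\ge[PV_{h-1}^\star](z)$. Combining these, $r(z)+\hat{f}_t(z)+\beta(\delta)\sigma_t(z)\ge r(z)+[PV_{h-1}^\star](z)$.

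The remaining and only genuinely delicate point is the clipping $\Pi_{[0,w]}$ in the definition \eqref{eq:qt} of $q_t$. Because rewards lie in $[0,1]$ and at most $h\le w$ steps remain, the backup $r(z)+[PV_{h-1}^\star](z)$ lies in $[0,w]$, so $\Pi_{[0,w]}$ acts as the identity on it; together with monotonicity of the projection this gives $q_t(z)=\Pi_{[0,w]}\left(r(z)+\hat{f}_t(z)+\beta(\delta)\sigma_t(z)\right)\ge\Pi_{[0,w]}\left(r(z)+[PV_{h-1}^\star](z)\right)=r(z)+[PV_{h-1}^\star](z)$. Taking the maximum over $a\in\Ac$ and invoking \eqref{eq:vt} then gives $v_t(s)=\max_a q_t(s,a)\ge\max_a\left(r(s,a)+[PV_{h-1}^\star](s,a)\right)=V_h^\star(s)$, closing the induction. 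I expect the projection bookkeeping (verifying the target stays inside $[0,w]$ at each level) and the check that every intermediate proxy lies in $\Vc$ so that event $\Ec$ covers it to be the two places requiring care; the rest is the standard optimism-under-uncertainty argument.
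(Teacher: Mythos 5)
Your proof is correct and follows essentially the same route as the paper's: a backward induction over the window in which optimism under event $\Ec$, monotonicity of $[P\cdot]$, and the induction hypothesis combine to show that each proxy $v_t$ dominates the corresponding finite-horizon optimal value. Your write-up is in fact slightly more careful than the paper's---you treat the projection $\Pi_{[0,w]}$ explicitly (checking the Bellman backup lies in $[0,w]$, which the paper glosses over) and you correctly land on $V_w^{\star}(s)\le v_{t_0+1}(s)$, whereas the paper's stated index $v_{t_0}$ and its intermediate $V_{j+1}^{\star}$ are typos for $v_{t_0+1}$ and $V_{j-1}^{\star}$.
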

\paragraph{Proof.}
We can prove this by induction. Note that $V_0^{\star}(s)= v_{t_0+w+1}(s)=0$. For any $j\in[w]$, we have
\begin{align*}
    V_j^{\star}(s) - v_{t_0+w+1-j} &= \max_{a\in\Ac}Q_j^{\star}(s,a) - \max_{a'\in\Ac} q_{t_0+w+1-j} (s,a')\\
    &\le \max_{a\in\Ac}\{Q_j^{\star}(s,a) - q_{t_0+w+1-j}(s,a)\}\\
    &= \max_{a\in\Ac}\{[PV_{j+1}^{\star}](s,a) - [Pv_{t_0+w-j}](s,a)\}\\
    & =\max_{a\in\Ac}\{\E_{s'\sim P(\cdot|s,a)}[V_{j+1}^{\star}(s')- v_{t_0+w-j}(s')]\}\\
    &\le 0. 
\end{align*}

The first inequality is due to rearrangement of $\max$, and the second inequality is by the induction assumption. We thus have $V_w^{\star}(s)\le v_{t_0}(s)$.

\qed

We now bound the difference between $v_{t_0}(s_{t_0+1})$ and sum of the reward over the window starting at step $t_0+1$: $ v_{t_0+1}(s_{t_0+1}) - V^{\pi}_w(s_{t_0+1})$.
We note that $v_{t_0+w}(s_{t_0+w}) = V^{\pi}_0(s_{t_0+w})= 0$ and
\begin{align}\nn
    v_{t_0+j}(s_{t_0+j}) - V^{\pi}_{w-j}(s_{t_0+j}) &= q_{t_0+j}(s_{t_0+j}, a_{t_0+j}) - Q^{\pi}_{w-j}(s_{t_0+j}, a_{t_0+j})\\\nn
    &\le [Pv_{t_0+j+1}] (s_{t_0+j}, a_{t_0+j}) -[PV^{\pi}_{w-j}](s_{t_0+j}, a_{t_0+j}) + 2\beta(\delta)\sigma_{t_0}(s_{t_0+j}, a_{t_0+j})\\\nn
    & =v_{t_0+j+1}(s_{t_0+j+1}) - V^{\pi}_{w-j-1}(s_{t_0+j+1}) + 2\beta(\delta)\sigma_{t_0}(s_{t_0+j}, a_{t_0+j})\\\nn
    &+ 
    \left([Pv_{t_0+j+1}] (s_{t_0+j}, a_{t_0+j}) - v_{t_0+j+1}(s_{t_0+j+1})\right)\\\nn
    &+ \left(V^{\pi}_{w-j-1}(s_{t_0+j+1}) - [PV^{\pi}_{w-j}](s_{t_0+j}, a_{t_0+j})\right). 
\end{align}

The inequality holds under event $\Ec$. 
We obtained a recursive relationship for $v_{t_0+j}(s_{t_0+j}) - V^{\pi}_{w-j}(s_{t_0+j})$. Iterating over $j=w$ to $j=1$, we get

\begin{align}\nn
    v_{t_0+1}(s_{t_0+1}) - V^{\pi}_w(s_{t_0+1})& \le \sum_{t=t_0+1}^{t_0+w}2\beta(\delta)\sigma_{t_0}(s_{t}, a_{t}) + \sum_{t=t_0+1}^{t_0+w}\left([Pv_{t+1}] (s_{t}, a_{t}) - v_{t+1}(s_{t+1})\right)\\\nn
    &+\sum_{t=t_0+1}^{t_0+w}\left(V^{\pi}_{w+t_0-t-1}(s_{t+1}) - [PV^{\pi}_{w+t_0-t}](s_{t}, a_{t})\right).
\end{align}

The second and third terms are zero mean martingales with a span of $2w$, which are sub-Gaussian random variables with parameter $w$. Therefore, by Azuma-Hoeffding inequality~\citep{lalley2013concentration}, with probability at least $1-\delta/2$, 
\begin{align*}
&\sum_{t=1}^{T}\left([Pv_{t+1}] (s_{t}, a_{t}) - v_{t+1}(s_{t+1})\right)
+\sum_{t=1}^{T}\left(V^{\pi}_{w+w\lfloor (t-1)/w \rfloor-t-1}(s_{t+1}) - [PV^{\pi}_{w+w\lfloor (t-1)/w \rfloor-t}](s_{t}, a_{t})\right)\\
&\hspace{6em}\le
 w\sqrt{2T\log\pa{\frac{2}{\delta}}}.
\end{align*}

We note that for each $t\in[T]$, we can present the corresponding $t_0$ with $t_0=w\lfloor (t-1)/w \rfloor$. 
Summing up the regret over all windows of size $w$ up to time $t$, we have, with probability $1-\delta$,
\begin{equation}\label{eq:intre}
    \Rc(T) \le \frac{T\sp(v^*)}{w} + w\sqrt{2T\log\pa{\frac{2}{\delta}}} + 2\beta(\delta)\sum_{t=1}^T\sigma_{w\lfloor (t-1)/w \rfloor}(z_t).
\end{equation}

It thus remains to bound $\sum_{t=1}^T\sigma_{w\lfloor (t-1)/w \rfloor}(z_t)$. 

The sum of sequential standard deviations of a kernel based model is often bounded using the following result from~\cite{srinivas2010} that is similar to the elliptical potential lemma in linear bandits~\citep[see,][]{abbasi2011improved}.

\begin{equation}\label{eq:sum_var}
    \sum_{t=1}^T\sigma^2_{t-1}(z_t) \le \frac{2\gamma(T;\rho)}{\log(1+1/\rho)}.
\end{equation}

This result however is not directly applicable here due to the $w\lfloor (t-1)/w \rfloor$ subscript in $\sigma_{w\lfloor (t-1)/w \rfloor}$ rather $\sigma_{t-1}$. A loose approach would be to partition the sequence into $w$ sequences, each for one $j\in[w]$ of the form $\sigma_{w(i-1)+j}$, $i=1,2, \cdots T/w$. For each of those sequences, \eqref{eq:sum_var} is applicable and we get
\begin{equation}
    \sum_{i=1}^{T/w}\sigma^2_{w(i-1)+j}(z_{wi+j}) \le \frac{2\gamma(T/w;\rho)}{\log(1+1/\rho)}.
\end{equation}

Using this bound we have%
\begin{align}\nn
    \sum_{t=1}^T\sigma^2_{w\lfloor (t-1)/w \rfloor}(z_t) &=
\sum_{j=1}^w\sum_{i=1}^{T/w}\sigma^2_{w(i-1)+j}(z_{wi+j})\\\label{eq:16}
    &\le\frac{2w\gamma(T/w;\rho)}{\log(1+1/\rho)}.
\end{align}

Next, we prove a stronger bound on $\sum_{t=1}^T\sigma_{w\lfloor (t-1)/w \rfloor}(z_t)$ that contributes to the sublinear regret bounds in this paper.

\begin{lemma}\label{lem:ourellipt}
    For a sequence of observation points $\{z_t\}_{t=1}^T$ and any $w\in\Nn$, we have
    \begin{equation}
    \sum_{t=1}^T\sigma_{w\lfloor (t-1)/w \rfloor}(s_t,a_t)
   \le \sqrt{\frac{2\gamma(T;\rho)}{\log(1+1/\rho)} \pa{T+\frac{2w^2\gamma(T/w;\rho)}{\log(1+1/\rho)}}  }.
    \end{equation}
\end{lemma}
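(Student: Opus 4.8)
The plan is to prove Lemma~\ref{lem:ourellipt} by a Cauchy--Schwarz argument that trades the crude per-subsequence bound~\eqref{eq:16} against the global sum-of-squares bound~\eqref{eq:sum_var}, so that neither factor alone determines the rate. The key observation is that $\sigma_{w\lfloor(t-1)/w\rfloor}$ uses a \emph{smaller} observation set than $\sigma_{t-1}$ (data is frozen at the start of each window), so we cannot directly apply~\eqref{eq:sum_var}. Instead I would relate the stale standard deviation to the up-to-date one.

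\textbf{Step 1: Relate stale to current uncertainty.} First I would control how much the uncertainty can inflate due to the delay. For any $t$ with $t_0 = w\lfloor(t-1)/w\rfloor$, the set of observations used in $\sigma_{t_0}$ is a subset of those used in $\sigma_{t-1}$, and the points added in between are exactly $z_{t_0+1},\dots,z_{t-1}$ (at most $w-1$ points). A standard fact for kernel ridge regression is that adding an observation at $z'$ changes the posterior variance multiplicatively by a factor controlled by $\sigma^2(z')/\rho$; concretely one has the telescoping identity $\sigma_{t_0}^2(z)/\sigma_{t-1}^2(z) = \prod_{j=t_0+1}^{t-1}\bpa{1 + \sigma_{j-1}^2(z_j)/\rho}$ or an inequality of this form. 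This lets me write $\sigma_{t_0}(z_t) \le \sigma_{t-1}(z_t)\prod_{j=t_0+1}^{t-1}\bpa{1+\sigma_{j-1}^2(z_j)/\rho}^{1/2}$.

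\textbf{Step 2: Bound the inflation factor over a window.} Within each window the product has at most $w-1$ factors, and $\sum_{j}\sigma_{j-1}^2(z_j)$ over a single window is bounded via~\eqref{eq:sum_var}-type reasoning, or I bound the whole telescoping product across the window directly by $\det$-ratios, giving $\prod_{j=t_0+1}^{t-1}\bpa{1+\sigma_{j-1}^2(z_j)/\rho} \le \exp\bpa{\tfrac1\rho\sum_j\sigma_{j-1}^2(z_j)}$, and $\sum_j\sigma_{j-1}^2(z_j)$ over the batch is at most $\tfrac{2\gamma(T/w;\rho)}{\log(1+1/\rho)}$ per sub-sequence argument, yielding the $w^2\gamma(T/w;\rho)$ contribution. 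Thus the inflation factor is bounded by a term of order $\exp\bpa{O(w\gamma(T/w;\rho)/\rho)}$ — but I expect the \emph{correct} route avoids exponentiation and instead keeps the product as an additive $\det$-ratio, so that the $\rho^2 w^2\gamma(T;\rho)\gamma(T/w;\rho)$ term in Theorem~\ref{the:main} emerges polynomially rather than exponentially.

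\textbf{Step 3: Cauchy--Schwarz to assemble the bound.} Finally I apply $\sum_{t=1}^T \sigma_{t_0}(z_t) \le \sqrt{T}\bpa{\sum_{t=1}^T\sigma_{t_0}^2(z_t)}^{1/2}$ and feed in~\eqref{eq:16}, $\sum_{t=1}^T\sigma_{t_0}^2(z_t)\le \tfrac{2w\gamma(T/w;\rho)}{\log(1+1/\rho)}$, combined with a complementary bound of the form $\sum_t\sigma_{t_0}^2(z_t)\le \tfrac{2\gamma(T;\rho)}{\log(1+1/\rho)}\cdot(\text{inflation})$; balancing the two inside a single square root produces the stated $\sqrt{\tfrac{2\gamma(T;\rho)}{\log(1+1/\rho)}\bpa{T+\tfrac{2w^2\gamma(T/w;\rho)}{\log(1+1/\rho)}}}$. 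The cleanest derivation likely writes $\sigma_{t_0}^2(z_t) \le \sigma_{t-1}^2(z_t)\cdot\bpa{1+\text{stale inflation}}$, sums the $\sigma_{t-1}^2(z_t)$ part against~\eqref{eq:sum_var} (the $T\gamma(T;\rho)$ piece after Cauchy--Schwarz with the $\sqrt{T}$) and the inflation part against~\eqref{eq:16} (the $w^2\gamma(T;\rho)\gamma(T/w;\rho)$ piece).

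\textbf{Main obstacle.} The hard part will be Step~1--2: making the relationship between the stale variance $\sigma_{t_0}$ and the current variance $\sigma_{t-1}$ quantitatively tight enough that the delay penalty enters as the \emph{additive} $\rho^2 w^2\gamma(T;\rho)\gamma(T/w;\rho)$ term rather than an exponential blow-up. This requires carefully exploiting the information-gain identity $\sum_j \log\bpa{1+\sigma_{j-1}^2(z_j)/\rho}=\log\det(I+K/\rho)$ within each window and matching it against the maximum information gain $\gamma(T/w;\rho)$, so that the product of $w$ inflation factors stays polynomially bounded. Keeping the two scales $\gamma(T;\rho)$ and $\gamma(T/w;\rho)$ separate — rather than collapsing both to $\gamma(T;\rho)$ — is precisely what delivers the improvement over the naive bound~\eqref{eq:16}.
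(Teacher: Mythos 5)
Your high-level plan matches the architecture of the paper's proof: relate the stale variance $\sigma_{w\lfloor(t-1)/w\rfloor}$ to an up-to-date one, apply Cauchy--Schwarz so that one factor is closed by \eqref{eq:sum_var} and the other by \eqref{eq:16}, and the form you conjecture in Step~3, $\sigma_{t_0}^2(z_t)\le\sigma_{t}^2(z_t)\bigl(1+\text{stale inflation}\bigr)$, is exactly the inequality the paper uses. However, there is a genuine gap: the one inequality that carries the entire proof is never established in your proposal. The paper invokes Lemma~\ref{lem:unique} (Proposition~A.1 of Calandriello et al., 2022), which states that for $t'<t$,
\begin{equation*}
\frac{\sigma_{t'}^2(z)}{\sigma_{t}^2(z)}\le 1+\sum_{j=t'+1}^{t}\sigma_{t'}^2(z_j),
\end{equation*}
i.e., the inflation is \emph{additive} in the variances of the intervening points, and moreover those variances are the \emph{stale} ones $\sigma_{t'}^2(z_j)$, not the fresh $\sigma_{j-1}^2(z_j)$. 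Both features matter: additivity is what prevents the exponential blow-up you flag, and staleness is what lets the double sum produced by Cauchy--Schwarz, $\sum_{t=1}^T\sum_{j=t_0(t)+1}^{t}\sigma^2_{t_0(t)}(z_j)\le w\sum_{t=1}^T\sigma^2_{w\lfloor(t-1)/w\rfloor}(z_t)$, be closed by \eqref{eq:16}, producing exactly the $w^2\gamma(T/w;\rho)$ term inside the square root.

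What you actually derive in Steps~1--2 is the multiplicative telescoping bound $\sigma_{t'}^2(z)/\sigma_t^2(z)\le\prod_{j=t'+1}^{t}\bigl(1+\sigma_{j-1}^2(z_j)/\rho\bigr)$. That inequality is correct, but, as you yourself observe, it only yields an inflation factor of order $\exp\bigl(O(\gamma(w;\rho))\bigr)$; for kernels with polynomial eigendecay this is super-polynomial in $w$, so this route cannot recover the lemma. You then assert that the ``correct route'' should keep the inflation additive, but you give no argument for the additive bound: it does not follow from the information-gain identity or from handling the product more carefully---it requires a separate linear-algebraic argument (e.g., bounding the largest eigenvalue of the whitened covariance update $I+\sum_{j}A_{t'}^{-1/2}\phi(z_j)\phi(z_j)^{\top}A_{t'}^{-1/2}$ by one plus its trace, where $A_{t'}$ is the regularized covariance operator at time $t'$), or a citation to the Calandriello et al.\ result. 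Identifying the obstacle is not the same as removing it: as written, your proposal conjectures the key lemma rather than proving it, so the proof is incomplete. A secondary, fixable imprecision is in your assembly step: Cauchy--Schwarz must be applied to the product $\sigma_t(z_t)\sqrt{1+\sum_{j}\sigma_{t_0}^2(z_j)}$ (splitting $\sum_t\sigma_t^2(z_t)$ from $T+w\sum_t\sigma_{t_0}^2(z_t)$), not to $\sum_t\sigma_{t_0}(z_t)\le\sqrt{T}\bigl(\sum_t\sigma_{t_0}^2(z_t)\bigr)^{1/2}$ alone, which would only give the weaker $\sqrt{Tw\gamma(T/w;\rho)}$ rate.
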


\begin{proof}[Proof of Lemma~\ref{lem:ourellipt}]

We use the following lemma on the ratio of variances conditioned on two sets of observations.
\begin{lemma}[Proposition~A.$1$ in~\cite{calandriello2022scaling}]\label{lem:unique}
    For any sequence of points $\{z_j\}_{j=1}^T$, for any $z$ and $t'<t$
    \begin{equation*}
        1\le \frac{\sigma_{t'}^2(z)}{\sigma_{t}^2(z)}\le 1+\sum_{j=t'+1}^t\sigma_{t'}^2(z_j).
    \end{equation*}
\end{lemma}

We thus can write

\begin{align}\nn
    \sum_{t=1}^T\sigma_{w\lfloor (t-1)/w \rfloor}(s_t,a_t) &\le \sum_{t=1}^T\sigma_t(s_t,a_t)\sqrt{1+\sum_{j=w\lfloor (t-1)/w \rfloor+1}^t\sigma^2_{w\lfloor (t-1)/w \rfloor}(s_j,a_j)}\\\nn
    & \le \sqrt{\sum_{t=1}^T\sigma^2_t(s_t,a_t)}\sqrt{T+w\sum_{t=1}^T\sigma^2_{w\lfloor (t-1)/w \rfloor}(s_t,a_t)}\\
    & \le \sqrt{\frac{2\gamma(T;\rho)}{\log(1+1/\rho)} \pa{T+\frac{2w^2\gamma(T/w;\rho)}{\log(1+1/\rho)}}  }.
\end{align}

The first inequality is by Lemma~\ref{lem:unique}, the second inequality follows from Cauchy-Schwarz inequality, and the last inequality is the bound established in Equation~\eqref{eq:16}. This completes the proof of Lemma~\ref{lem:ourellipt}. 

\end{proof}

Using Lemma~\ref{lem:ourellipt}, and substituting the value of $\beta(\delta)$ into~\eqref{eq:intre}, we obtain
\begin{equation}
    \Rc(T)
    =\Oc\left( \frac{T}{w} 
    + \left(w+\frac{w}{\sqrt{\rho}}\sqrt{\gamma(T;\rho)+\log\pa{\frac{T}{\delta}}}\right)
    \sqrt{\rho T\gamma(T;\rho) +\rho^2w^2\gamma(T;\rho)\gamma(T/w;\rho)}
    \right).
\end{equation}

The proof of the regret bound is complete.

\section{Mercer Theorem and the RKHSs}\label{app:rkhs}
\label{appx:rkhs}
Mercer theorem \citep{Mercer1909} provides a representation of the kernel in terms of an infinite dimensional feature map~\citep[e.g., see,][Theorem~$4.49$]{Christmann2008}. Let $\mathcal{Z}$ be a compact metric space and $\mu$ be a finite Borel measure on $\mathcal{Z}$ (we consider Lebesgue measure in a Euclidean space). Let $L^2_\mu(\mathcal{Z})$ be the set of square-integrable functions on $\mathcal{Z}$ with respect to $\mu$. We further say a kernel is square-integrable if
\begin{equation*}
\int_{\mathcal{Z}} \int_{\mathcal{Z}} k^2(z, z') \,d \mu(z) d \mu(z')<\infty.
\end{equation*}

\begin{theorem}
[Mercer Theorem] Let $\mathcal{Z}$ be a compact metric space and $\mu$ be a finite Borel measure on~$\mathcal{Z}$. Let $k$ be a continuous and square-integrable kernel, inducing an integral operator $T_k:L^2_\mu(\mathcal{Z})\rightarrow L^2_\mu(\mathcal{Z})$ defined by
\begin{equation*}
\left(T_k f\right)(\cdot)=\int_{\mathcal{Z}} k(\cdot, z') f(z') \,d \mu(z')\,,
\end{equation*}
where $f\in L^2_\mu(\mathcal{Z})$. Then, there exists a sequence of eigenvalue-eigenfeature pairs $\left\{(\lambda_m, \varphi_m)\right\}_{m=1}^{\infty}$ such that $\lambda_m >0$, and $T_k \varphi_m=\lambda_m \varphi_m$, for $m \geq 1$. Moreover, the kernel function can be represented as
\begin{equation*}
k\left(z, z^{\prime}\right)=\sum_{m=1}^{\infty} \lambda_m \varphi_m(z) \varphi_m\left(z^{\prime}\right),
\end{equation*}
where the convergence of the series holds uniformly on $\mathcal{Z} \times \mathcal{Z}$.
\end{theorem}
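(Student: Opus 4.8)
The plan is to follow the classical operator-theoretic proof: realize $T_k$ as a compact self-adjoint operator, invoke the spectral theorem to extract the eigenpairs $(\lambda_m,\varphi_m)$, and then promote the resulting $L^2$-convergent expansion of $k$ to uniform convergence using continuity, positivity, and Dini's theorem. First I would check that $T_k$ is a bounded operator on $L^2_\mu(\mathcal{Z})$ that is Hilbert--Schmidt: square-integrability of $k$ gives a finite Hilbert--Schmidt norm $\int_\mathcal{Z}\int_\mathcal{Z} k^2\,d\mu\,d\mu$, so $T_k$ is compact, and symmetry of the positive definite kernel makes $T_k$ self-adjoint. The spectral theorem for compact self-adjoint operators then supplies a countable family of real eigenvalues accumulating only at $0$, with orthonormal eigenfunctions $\{\varphi_m\}$ spanning the closed range of $T_k$. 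Positive definiteness yields $\langle T_k f,f\rangle_{L^2}=\int_\mathcal{Z}\int_\mathcal{Z} k(z,z')f(z)f(z')\,d\mu\,d\mu\ge 0$, so all eigenvalues are non-negative; discarding those in the kernel of $T_k$, I retain the strictly positive ones $\lambda_m>0$.

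Next I would record the $L^2$-level expansion and the continuity of the eigenfunctions. For fixed $z$ the function $k(z,\cdot)$ has Fourier coefficients $\langle k(z,\cdot),\varphi_m\rangle_{L^2}=(T_k\varphi_m)(z)=\lambda_m\varphi_m(z)$ with respect to the orthonormal system, which gives the expansion $k(z,z')=\sum_m \lambda_m\varphi_m(z)\varphi_m(z')$ converging in $L^2(\mathcal{Z}\times\mathcal{Z})$. Each eigenfunction with $\lambda_m>0$ is continuous, since $\varphi_m=\lambda_m^{-1}T_k\varphi_m$ and $z\mapsto\int_\mathcal{Z} k(z,z')\varphi_m(z')\,d\mu(z')$ is continuous by the uniform continuity of $k$ on the compact set $\mathcal{Z}\times\mathcal{Z}$. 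Hence every partial sum $k_M(z,z')=\sum_{m=1}^M\lambda_m\varphi_m(z)\varphi_m(z')$ is continuous on $\mathcal{Z}\times\mathcal{Z}$.

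The crux, and the step I expect to be the main obstacle, is upgrading $L^2$ convergence to uniform convergence. The key structural fact is that the residual $k-k_M$ is again a positive definite kernel, being the kernel of the positive operator $\sum_{m>M}\lambda_m\langle\cdot,\varphi_m\rangle\varphi_m$; for a continuous kernel, positive semidefiniteness of the operator forces non-negativity of the diagonal, giving the monotone bound $s_M(z):=\sum_{m=1}^M\lambda_m\varphi_m(z)^2\le k(z,z)$ for all $M$ and $z$. Thus $s_M$ is a non-decreasing sequence of continuous functions bounded above, hence convergent pointwise to some $g$ with $g(z)\le k(z,z)$. Integrating, $\int_\mathcal{Z} s_M\,d\mu=\sum_{m=1}^M\lambda_m$ by orthonormality, and by monotone convergence $\int_\mathcal{Z} g\,d\mu=\sum_m\lambda_m=\mathrm{tr}(T_k)=\int_\mathcal{Z} k(z,z)\,d\mu(z)$, the last equality being the trace identity for continuous trace-class integral kernels. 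Since $k(z,z)-g(z)\ge0$ integrates to zero, $g(z)=k(z,z)$ for $\mu$-almost every $z$; the technical care needed here is to promote this to every $z$, which follows from the Cauchy--Schwarz inequality $|r_M(z,z')|^2\le r_M(z,z)\,r_M(z',z')$ for the positive definite residuals $r_M=k-k_M$. With $s_M\uparrow k(z,z)$ pointwise everywhere to the continuous limit $z\mapsto k(z,z)$ on the compact space $\mathcal{Z}$, Dini's theorem gives uniform convergence of $s_M$.

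Finally I would bootstrap uniform convergence on the diagonal to the full product space. By Cauchy--Schwarz, for $M<M'$,
\begin{equation*}
\left|\sum_{m=M+1}^{M'}\lambda_m\varphi_m(z)\varphi_m(z')\right|\le\Big(\sum_{m=M+1}^{M'}\lambda_m\varphi_m(z)^2\Big)^{\frac12}\Big(\sum_{m=M+1}^{M'}\lambda_m\varphi_m(z')^2\Big)^{\frac12},
\end{equation*}
so uniform Cauchyness of $\{s_M\}$ makes $\{k_M\}$ uniformly Cauchy on $\mathcal{Z}\times\mathcal{Z}$. Its uniform limit is therefore continuous and, agreeing with $k$ in $L^2$, coincides with $k$ everywhere by continuity. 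This establishes the representation with uniform convergence and completes the proof.
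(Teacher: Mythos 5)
The paper never proves this theorem: it is quoted from the literature (Mercer, 1909; Theorem~4.49 of \cite{Christmann2008}), so your attempt can only be measured against the classical argument, and your outline is indeed that classical operator-theoretic proof (Hilbert--Schmidt compactness, spectral theorem, continuity of eigenfunctions with $\lambda_m>0$, the diagonal bound $s_M(z)\le k(z,z)$ from positivity of the residual operator, Dini's theorem, then Cauchy--Schwarz on the product). Most of it is sound, but one step is a genuine gap: you obtain the $\mu$-a.e.\ diagonal identity $g=k(\cdot,\cdot)$ by invoking ``the trace identity $\mathrm{tr}(T_k)=\int_{\mathcal{Z}} k(z,z)\,d\mu(z)$ for continuous trace-class integral kernels.'' In essentially all standard treatments that identity is a \emph{corollary} of Mercer's theorem (one integrates the uniformly convergent expansion term by term), so using it here is circular unless you supply an independent proof, which is itself nontrivial (Brislawn-type results on kernels of trace-class operators). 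Your own estimates give only one direction for free: $\sum_m\lambda_m=\lim_M\int_{\mathcal{Z}} s_M\,d\mu\le\int_{\mathcal{Z}} k(z,z)\,d\mu(z)$, which shows $T_k$ is trace class but not the equality you need. The standard non-circular replacement: for $f\in\ker T_k$, the function $(T_kf)(z)=\langle k(z,\cdot),f\rangle_{L^2_\mu}$ is continuous and vanishes $\mu$-a.e., hence everywhere on $\mathcal{Z}$; thus $k(z,\cdot)\perp\ker T_k$ for \emph{every} $z$, yielding the fixed-$z$ expansion $k(z,\cdot)=\sum_m\lambda_m\varphi_m(z)\varphi_m$ in $L^2_\mu$. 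Your tail Cauchy--Schwarz bound, together with the diagonal bound $s_M\le k(\cdot,\cdot)$, makes this series uniformly convergent and continuous in $z'$, so the identity holds pointwise, and setting $z'=z$ gives $k(z,z)=\sum_m\lambda_m\varphi_m(z)^2$ everywhere; your Dini and final Cauchy--Schwarz steps then go through unchanged. Note also that your own a.e.-to-everywhere patch via $|r_M(z,z')|^2\le r_M(z,z)\,r_M(z',z')$ is only a gesture as written: Cauchy--Schwarz bounds off-diagonal residuals by diagonal ones, so by itself it cannot transfer vanishing of the diagonal from a dense set to all points; it must be combined with the continuity-in-$z'$ argument just described.

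Separately, your proof silently uses $\mathrm{supp}\,\mu=\mathcal{Z}$ in at least two places: the lemma ``a continuous kernel inducing a positive operator has nonnegative diagonal'' is proved by testing against normalized indicators of small balls, which requires $\mu(B_\varepsilon(z))>0$ for every $z\in\mathcal{Z}$; and every passage from a $\mu$-a.e.\ statement to an everywhere statement uses density of the good set. This hypothesis is in fact missing from the statement as transcribed in the paper (it is present in the cited Theorem~4.49 of \cite{Christmann2008}; the paper's parenthetical restriction to Lebesgue measure is the implicit fix), and without it the conclusion fails in general --- uniform convergence then holds only on $\mathrm{supp}\,\mu\times\mathrm{supp}\,\mu$. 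You should state the full-support assumption explicitly at the points where you use it; with that and the trace-identity repair above, your proof is the standard one and is complete.
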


According to the Mercer representation theorem~\citep[e.g., see,][Theorem $4.51$]{Christmann2008}, the RKHS induced by~$k$ can consequently be represented in terms of $\{(\lambda_m,\varphi_m)\}_{m=1}^\infty$.

\begin{theorem}[Mercer Representation Theorem] Let $\left\{\left(\lambda_m,\varphi_m\right)\right\}_{i=1}^{\infty}$ be the Mercer eigenvalue eigenfeature pairs. Then, the RKHS of $k$ is given by
\begin{equation*}
\mathcal{H}_k=\left\{f(\cdot)=\sum_{m=1}^{\infty} w_m \lambda_m^{\frac{1}{2}} \varphi_m(\cdot): w_m \in \mathbb{R},\|f\|_{\mathcal{H}_k}^2:=\sum_{m=1}^{\infty} w_m^2<\infty\right\}.
\end{equation*}
\end{theorem}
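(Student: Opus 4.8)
The excerpt closes precisely at the end of the Mercer Representation Theorem, whose proof is not displayed, so I sketch a proof of that statement. The plan is to identify the RKHS $\mathcal{H}_k$ with the image of the square-summable coefficient sequences under the Mercer feature map, and to pin this identification down via the uniqueness of the reproducing kernel (Moore--Aronszajn). Throughout I would take for granted the conclusion of the preceding Mercer Theorem, namely the decomposition $k(z,z')=\sum_{m=1}^\infty \lambda_m \varphi_m(z)\varphi_m(z')$ with $\lambda_m>0$, the $\varphi_m$ orthonormal in $L^2_\mu(\mathcal{Z})$, and uniform convergence on the compact set $\mathcal{Z}\times\mathcal{Z}$; in particular $\sum_m \lambda_m \varphi_m^2(z)=k(z,z)$ is finite and bounded on $\mathcal{Z}$.

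First I would define the candidate space
\[
\mathcal{G}=\Bigl\{f=\sum_{m=1}^\infty w_m\lambda_m^{1/2}\varphi_m:\ (w_m)_m\in\ell^2\Bigr\},\qquad \langle f,g\rangle_{\mathcal{G}}=\sum_{m=1}^\infty w_m v_m,
\]
where $g=\sum_m v_m\lambda_m^{1/2}\varphi_m$, and then check that each such series defines a genuine pointwise function: by Cauchy--Schwarz, $|\sum_m w_m\lambda_m^{1/2}\varphi_m(z)|\le \|(w_m)\|_{\ell^2}\,\bigl(\sum_m\lambda_m\varphi_m^2(z)\bigr)^{1/2}=\|(w_m)\|_{\ell^2}\,k(z,z)^{1/2}$, so the series converges absolutely and uniformly on $\mathcal{Z}$, and the map $f\mapsto(w_m)_m$ is a linear isometry of $\mathcal{G}$ onto $\ell^2$. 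Completeness of $\ell^2$ then makes $\mathcal{G}$ a Hilbert space, and the stated norm identity $\|f\|_{\mathcal{G}}^2=\sum_m w_m^2$ holds by construction.

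Next I would verify that $k$ is the reproducing kernel of $\mathcal{G}$. Fixing $z$, the Mercer expansion rewrites as $k(\cdot,z)=\sum_m\bigl(\lambda_m^{1/2}\varphi_m(z)\bigr)\lambda_m^{1/2}\varphi_m$, so $k(\cdot,z)\in\mathcal{G}$ with coefficient sequence $w_m=\lambda_m^{1/2}\varphi_m(z)$, whose squared $\ell^2$-norm is $\sum_m\lambda_m\varphi_m^2(z)=k(z,z)<\infty$. For any $f=\sum_m w_m\lambda_m^{1/2}\varphi_m\in\mathcal{G}$ the inner product then collapses to the reproducing identity $\langle f,k(\cdot,z)\rangle_{\mathcal{G}}=\sum_m w_m\lambda_m^{1/2}\varphi_m(z)=f(z)$, which also shows evaluation functionals are bounded, so $\mathcal{G}$ is an RKHS. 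Since $\mathcal{G}$ is a Hilbert space of functions on $\mathcal{Z}$ admitting $k$ as reproducing kernel, the Moore--Aronszajn uniqueness theorem forces $\mathcal{G}=\mathcal{H}_k$ with identical inner products and norms, which is exactly the claimed representation.

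The main obstacle I anticipate is analytic rather than structural: bridging the $L^2_\mu$-orthonormality of $\{\varphi_m\}$ supplied by Mercer's theorem with the \emph{pointwise} statements an RKHS requires. Concretely, one must justify that every square-summable coefficient sequence yields a well-defined function at every $z\in\mathcal{Z}$ (not merely a $\mu$-a.e. equivalence class), that the coefficient representation of a given $f$ is unique so that $\|f\|_{\mathcal{G}}$ is well-defined, and that $k(\cdot,z)$ genuinely lies in the closed span. All three are governed by the single uniform bound $\sum_m\lambda_m\varphi_m^2(z)=k(z,z)\le \sup_{\mathcal{Z}}k<\infty$ on the compact domain, which is precisely where compactness of $\mathcal{Z}$ and continuity of $k$ enter, and where (as in the proof of Mercer's theorem itself) one would invoke Dini's theorem to upgrade monotone pointwise convergence of the partial sums to uniform convergence.
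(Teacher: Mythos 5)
Your proof is correct: the paper itself states this theorem as background without proof (citing Theorem 4.51 of Christmann and Steinwart), and your argument --- building the candidate space $\mathcal{G}$ isometric to $\ell^2$, verifying the reproducing property of $k$ via the Mercer expansion, and invoking Moore--Aronszajn uniqueness --- is precisely the standard proof given in that reference. You also correctly flag the one genuine subtlety (injectivity of the coefficient map, needed for the norm to be well-defined), which is resolved by combining the uniform bound $\sum_m \lambda_m\varphi_m^2(z)=k(z,z)$ with $L^2_\mu$-orthonormality of the $\varphi_m$ to recover the coefficients.
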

Mercer representation theorem indicates that the scaled eigenfeatures $\{\sqrt{\lambda_m}\varphi_m\}_{m=1}^\infty$ form an orthonormal basis for~$\Hc_k$.

\newpage


\end{document}